\newcommand{\blind}{1}
\newtheorem{proposition}{Proposition}
\newtheorem{lemma}{Lemma}
\newtheorem{remark}{Remark}
\DeclareMathOperator*{\maxi}{maximize~~}
\DeclareMathOperator*{\mini}{minimize~~}
\DeclareMathOperator{\sign}{sign}
\DeclareMathOperator{\polar}{polar}
\DeclareMathOperator{\ev}{EV}
\DeclareMathOperator{\tr}{tr}
\DeclareMathOperator{\order}{\mathcal{O}}
\newcommand{\1}{\mathds{1}}
\newcommand{\R}{\mathbb{R}}
\newcommand{\U}{\mathcal{U}}
\newcommand{\V}{\mathcal{V}}
\newcommand{\B}{\mathcal{B}}
\newcommand{\cP}{\mathcal{P}}
\newcommand{\T}{^{\text{\normalfont{T}}}}
\newcommand{\F}{_{\text{\normalfont{F}}}}
\newcommand{\norm}[1]{{\left\|#1\right\|}}
\newcommand{\st}{\text{\normalfont{~~subject to~~}}}
\newcommand{\prs}[1]{\texttt{PRS} (#1)}
\newcommand{\sca}[1]{\texttt{SCA}\left(#1\right)}
\newcommand{\sma}[1]{\texttt{SMA}\left(#1\right)}
\begin{document}

\def\spacingset#1{\renewcommand{\baselinestretch}%
{#1}\small\normalsize} \spacingset{1}

%%%%%%%%%%%%%%%%%%%%%%%%%%%%%%%%%%%%%%%%%%%%%%%%%%%%%%%%%%%%%%%%%%%%%%%%%%%%%%

\if1\blind
{
  \title{\bf A New Basis for Sparse Principal Component Analysis}
  \author{Fan Chen
    and 
    Karl Rohe\thanks{
    	The authors gratefully acknowledge National Science Foundation grant DMS-1612456 and DMS-1916378 and Army Research Office grant W911NF-15-1-0423.}\hspace{.2cm}\\
    Department of Statistics \\
    University of Wisconsin--Madison}
  \maketitle
} \fi

\if0\blind
{
  \bigskip
  \bigskip
  \bigskip
  \begin{center}
    {\LARGE\bf A New Basis for Sparse Principal Component Analysis}
\end{center}
  \medskip
} \fi

\bigskip
\begin{abstract}

Previous versions of sparse principal component analysis (PCA) have presumed that the eigen-basis (a $p \times k$ matrix) is approximately sparse.  
We propose a method that presumes the $p \times k$ matrix becomes approximately sparse after a $k \times k$ rotation.
The simplest version of the algorithm initializes with the leading $k$ principal components. Then, the principal components are rotated with an $k \times k$ orthogonal rotation to make them approximately sparse. Finally, soft-thresholding is applied to the rotated principal components. 
This approach differs from prior approaches because it uses an orthogonal rotation to approximate a sparse basis. One consequence is that a sparse component need not to be a leading eigenvector, but rather a mixture of them. 
In this way, we propose a new (rotated) basis for sparse PCA.
In addition, our approach avoids ``deflation'' and multiple tuning parameters required for that.
Our sparse PCA framework is versatile; for example, it extends naturally to a two-way analysis of a data matrix for simultaneous dimensionality reduction of rows and columns. 
We provide evidence showing that for the same level of sparsity, the proposed sparse PCA method is more stable and can explain more variance compared to alternative methods. Through three applications---sparse coding of images, analysis of transcriptome sequencing data, and large-scale clustering of social networks, we demonstrate the modern usefulness of sparse PCA in exploring multivariate data.\footnote{An R package, \texttt{epca}, is available online.} 
\end{abstract}

\noindent%
{\it Keywords:} Column sparsity, dimensionality reduction, orthogonal rotation, sparse matrix decomposition, independent component analysis
\vfill

\newpage
\spacingset{1.5} % DON'T change the spacing!
\section{Introduction} 
Principal component analysis (PCA), introduced in the early 20th century \citep{pearson1901liii, hotelling1933analysis}, is one of the most prevalent tools in exploratory multivariate data analysis. 
PCA projects higher-dimensional data into a lower-dimensional space that is spanned by some uncorrelated principal components (PCs), with the vast majority of the variance in the data kept.
It is, however, commonly conceived that PCs are difficult to interpret  \citep[e.g.,][]{jeffers1967two}, as each PC is a linear combination of many, if not all, original variables.
To remedy such disadvantage, sparse PCA estimates ``sparse'' PCs, each of which consists of a small subset of original variables \citep{zou2018selective}. 

Sparse PCA is originally formulated as an optimization problem over the loading coefficients with a cardinality constraint. Such non-convex constraint results in an NP-hard problem in the strong sense \citep{tillmann2014computational}. In order to circumvent the obstacle, various methods have been proposed, such as the iconic regression-based approach by \citet{zou2006sparse}, 
a convex relaxation to semidefinite programming \citep{daspremont2007direct}, 
the penalized matrix decomposition framework of \citet{witten2009penalized}, 
and the generalized power method due to \citet{journee2010generalized}. 
More recently, theoretical developments of sparse PCA have covered the consistency \citep{johnstone2009consistency, shen2013consistency}, variable selection properties \citep{amini2009high}, rates of convergence, the minimaxity over some Gaussian or sub-Gaussian classes \citep{vu2013minimax, cai2013sparse}, and the statistical-computational trade-offs under the restricted covariance concentration condition \citep{berthet2013optimal, wang2016statistical}.

Despite the extensive literature of sparse PCA, there are two enigmas. 
First, sparse PCA often explains far less variance in the data than PCA does (Figure \ref{fig:1_simu_intro}). 
While this may appear to be a trade-off for sparsity, our results show that a substantial improvement is possible. 
Second, the most common formulations of sparse PCA rely on a matrix deflation after estimating each component. The deflation entails complications of multiple tuning parameters, non-orthogonality, and sub-optimality \citep{mackey2008deflation}. 
Identifiability and consistency present more subtle issues; there is no reason to assume a priori distinct eigenvalues or that the gaps between the eigenvalues are small \citep{vu2013fantope}. Estimating the subspace spanned by multiple sparse PCs at once overcomes this dilemma \citep{vu2013fantope}.

\begin{figure}
	\centering
	\textbf{By allowing for a rotated basis, sparse PCA can explain\\ nearly as much variance as the ordinary PCA.}
	
	\includegraphics[width=.6\linewidth]{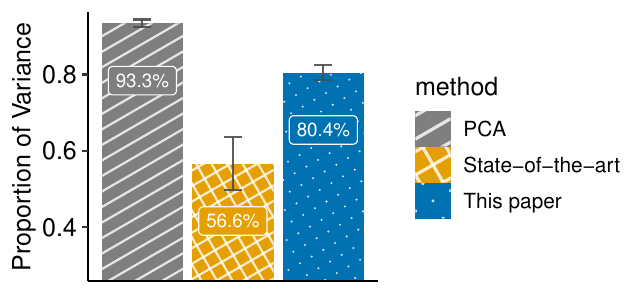}
	\caption{
		Comparison of the proportion of variance explained (PVE) by the 16 PCs estimated by PCA (grey), GPower (yellow, see \citet{journee2010generalized}), and the proposed sparse PCA method (blue).
		For each method, an error bar (based on the three-sigma rule) depicts the variation of PVE over 30 repeats of experiments. 
		More details about the simulated data and settings (e.g., sparsity constraints) are described in Section \ref{sxn:simupve}
	}
	\label{fig:1_simu_intro}
\end{figure}

There are two distinct notions of subspace sparsity: row sparsity and column sparsity \citep{vu2013minimax}. 
Contemporary approaches to sparse PCA primarily focus on row sparsity, which implies that the eigenvectors of the covariance matrix themselves are sparse \citep[e.g.,][]{moghaddam2006generalized}. 
The second notion, column sparsity, is an alternative. 
A column sparse subspace ``\textit{is one which has some orthogonal basis consisting of sparse vectors. This means that the choice of basis is crucial; the existence of a sparse basis is an implicit assumption behind the frequent use of rotation techniques by practitioners to help interpret principal components}'' \citep{vu2013minimax}. 
Row sparsity is the most prevalent notion of sparsity used in contemporary sparse PCA, yet it does not appear to describe many contemporary parametric multivariate models; conversely, many contemporary parametric models in multivariate statistics can be estimated with the sparse PCA approaches that can identify column sparsity \citep{rohe2020vintage}.

In high-dimensional regression, sparse penalties such as the Lasso resolve an invariance; there is an entire space of solutions $b$ which exactly interpolate the data $Y = Xb$ and presuming that the solution $b$ is sparse can make the solution unique. Interestingly, there is no analogue to ``sparsity resolving an invariance'' for the estimation of row sparse subspace, but there is a very clear analogue in estimating column sparse subspace; the basis is determined by the one that provides the most sparse representation of data.

\subsection{Our contributions}\label{sxn:contrib}

\begin{figure}
	\centering
	\textbf{Ordinary PCs without rotations show no sign of sparsity. After an orthogonal rotation, the PCs depict signs of sparsity (with loadings aligning closely with coordinate axes). This shows the prevalence of column sparsity in real data.}
	
	\includegraphics[width=.85\linewidth]{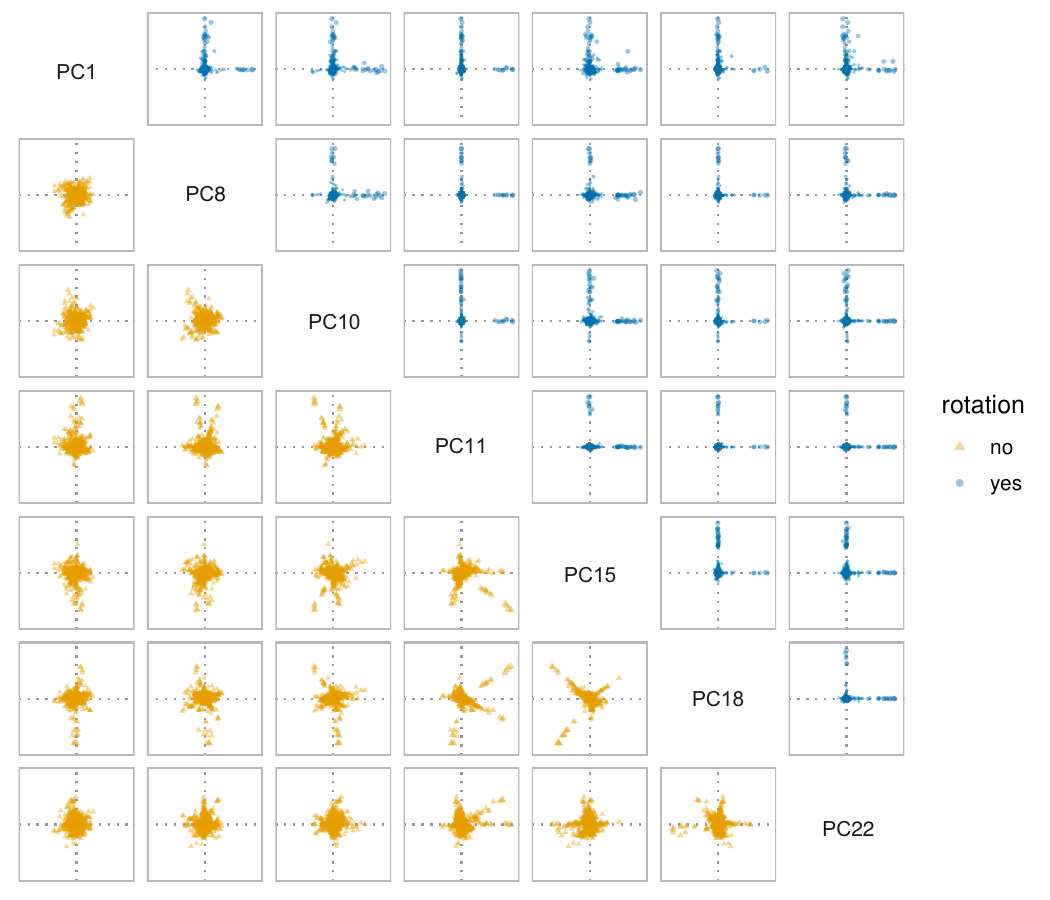}
	\caption{
		Loadings of seven principal components (PCs) from a large scale social network matrix.
		Each (off-diagonal) panel shows the loadings of two PCs on the original variables (displayed as points).
		The lower-triangular panels (yellow) depict the PCs before a rotation.
		The upper-triangular panels (blue) display the PCs after an orthogonal rotation. 
		The PCs before and after the rotation have no special or corresponding relationship.
		In each panel, two perpendicular dotted lines (grey) indicate the coordinate axes. 
		See Section \ref{sxn:twitter} for details about the data analyzed.
	}
	\label{fig:2_rotation}
\end{figure}

In this work, we propose a new method, sparse component analysis (SCA), to estimate multiple PCs that are column sparse. 
The column sparsity is achieved by allowing an orthogonal rotation to PCs prior to imposing any sparsity constraints. 
The algorithm is motivated by two facts.
First, an orthogonal rotation does not affect the total variance explained by a given set of PCs. 
% This promises an entire space in which any rotated basis can represent the data matrix.
Second, by choosing the orthogonal rotation carefully, PCs can be aligned closely with the coordinate axes, making them approximately sparse (Figure \ref{fig:2_rotation}). 
This technique has been commonly adapted in factor analysis, a close cousin of PCA \citep{thurstone1931multiple, kaiser1960application, jolliffe1995rotation}. 
For example, the varimax rotation \citep{kaiser1958varimax} is a popular choice in the psychology literature.
%The varimax rotation maximizes the variance of squared elements in the rotated matrix. 
SCA incorporates the orthogonal rotation and sparsity constraints to find the sparse and orthogonal basis in a subspace (i.e., column sparse PCs). 
We show in Proposition \ref{thm:improve} (Section \ref{sxn:prop}) that 
\begin{quote}
	\em column sparse PCs can explain more variance in the data than row sparse PCs. 
\end{quote}
We validated this with numerical experiments. Additionally, the simulations suggest that SCA is more stable and robust across tuning parameters than existing sparse PCA methods.
Our framework of SCA generalizes naturally to a two-way analysis of a data matrix for simultaneous row and column dimensionality reductions. 
For this, we introduce a low-rank matrix approximation method called sparse matrix approximation (SMA).
The SMA builds on the penalized matrix decomposition previously proposed by \citet{witten2009penalized}.
Furthermore, the SMA provides a unified view of sparse PCA and other modern multivariate data analysis, including sparse independent component analysis \citep[see, e.g.,][]{comon1994independent}. 
Finally, we demonstrate our sparse PCA methods with various high-dimensional data applications, including sparse coding of images, blind source separation, analysis of single-cell transcriptome data, and large-scale clustering of social networks. 
We find compelling evidence for the practical use of our approach, despite concerns about the consistency of PCA in high-dimensions.

\subsection{Organization}
The rest of this paper goes as follows. 
Section \ref{sxn:spca} describes the methods. 
Section \ref{sxn:connection} compares SCA to existing methods.
%Section \ref{sxn:details} provides several details of the SCA algorithm. 
Section \ref{sxn:simu} compares different sparse PCA methods using simulated data. 
Section \ref{sxn:app} applies SCA to several high-dimensional datasets. 
Section \ref{sxn:disc} concludes the paper with some discussions.

\subsection{Notations}
In this paper, we discuss the \textit{entrywise} matrix norm only. 
For any matrix $A \in\R^{m \times n}$, its entrywise $\ell_p$-norm is defined as
$\norm{A}_{p,p}={(\sum_{i=1}^m \sum_{j=1}^{n} {\left|A_{ij}\right|}^p)}^{1/p}$. 
For simplicity, we also use the notation $\norm{A}_{p}$ for entrywise norm, rather than the norm induced by a vector norm.
In particular, the Frobenius norm (or the Hilbert-Schmidt norm) is then an alias of entrywise $\ell_2$-norm, $\norm{A}\F = \sqrt{\sum_{i=1}^m \sum_{j=1}^n A_{ij}^2}=\norm{A}_2$.
Throughout, the following sets of matrices are frequently considered.
$\U(n)=\{U\in \R^{n\times n} \mid U\T U = U U\T= I_n\}$ denotes all orthogonal (unitary) matrices in $\R^n$. $\V(n,k)=\{V\in \R ^{n\times k}\mid V\T V=I_{k}\}$ represents the Stiefel manifold in $\R^n$, and 
$\B(n,k)=\{V\in\R^{n\times k} \mid V\T V\preceq I_k\}$ 
%$\B(n,k)=\{V\in\R^{n\times k} \mid {[V\T V]}_{ii}\le 1, 1 \le i\le k\}$ 
is its convex hull \citep{gallivan2010note}.

\section{The methods} \label{sxn:spca}

We present a new formulation of sparse PCA as follows. After revisiting PCA, we give the new formulation \eqref{eqn:sca1} (Section \ref{sxn:sca}) and elaborate how it represents column sparsity (Section \ref{sxn:column}) and how it outperforms a row sparsity based method (Section \ref{sxn:prop}). Next, we present an iterative algorithm to compute sparse PCA (Section \ref{sxn:alg}). Lastly, we apply the column sparsity concept to a more general matrix decomposition method (Section \ref{sxn:sma}). 

Consider the data matrix $X\in\R^{n\times p}$ of $n$ observations (or samples) on $p$ variables.
Without loss of generality, we assume that each column of $X$ is centered (i.e. mean-zero) unless otherwise noted.
Throughout this paper, we presume the number of underlying PCs, $k$, is known (see, e.g., \cite{chen2021estimating} for a separated work on estimating $k$ from data using ``cross-validated eigenvalues''). 
PCA finds $k$ uncorrelated linear transformations of the original variables such that after the linear transformations, the most variance is kept. That is, 
\begin{eqnarray} \label{eqn:pca}
	\maxi_Y & \norm{XY}\F  &\st Y\in\V(p,k),
\end{eqnarray}
where the feasible set is the Stiefel manifold, $\V(p,k)$. 
The $j$th PC is the linear combination of original variables whose coefficients are in the $j$th columns of $Y$.
The coefficients are often called \textit{loadings} (or loading coefficients).
Note that loadings are usually non-zero (i.e., $Y$ is usually not sparse).
The transformed data $S=XY\in\R^{n \times k}$ contains the \textit{scores}. That is, $S_{ij}$ is the score of the $i$th sample on the $j$th PC. 

In PCA, PCs are often defined sequentially. That is, in order to find the $k$th PCs, we fix the previous $k-1$ PCs and solve \eqref{eqn:pca}; repeat this for $k=1,2,...$ in order. 
Such definition %ensures the uniqueness of the loadings $Y$ and 
ensures the first $k$ PCs together always explain the most variance in the data.
By contrast, for sparse PCA, we reason in the following that it is sufficient to solve the optimization problem for all PCs at once. 
Note first that the solution to \eqref{eqn:pca} is a subspace, because if $Y^*$ is an optimizer of \eqref{eqn:pca}, then for any orthogonal matrix $R\in\U(k)$, $Y^*R$ is also an optimizer. 
The solution to \eqref{eqn:pca} being a rotation-invariant subspace is desirable because it allows a sparsity-enabling orthogonal rotation to any given solution. %such that after the rotation, the solution becomes sparse.
Importantly, such rotation exists under the assumption of \textit{column sparsity} \citep[see Section \ref{sxn:column} and][]{vu2013minimax}. 
We thereby propose a new method for sparse PCA.

\subsection{Sparse component analysis} \label{sxn:sca}
For sparse PCA, we impose an $\ell_1$-norm constraint\footnote{The $\ell_1$-norm constraint could be replaced by other sparsity constraints, e.g., the $\ell_{0}$-norm analogue.} on the loadings and formulate the following minimization of matrix reconstruction error (MRE)\footnote{MRE depicts the unexplained variation in the data, akin to the sum of squares error in regression.}: 
\begin{eqnarray}\label{eqn:sca1}
	\mini_{Z,B,Y} & & \norm{X -ZBY\T}\F\\
	\st & & Z\in\V(n,k),~ Y\in\V(p,k),~ \norm{Y}_1 \le \gamma,\nonumber
	% \mini_{Z,B,Y} & \norm{X -ZBY\T}\F & \st Z\in\V(n,k),~ Y\in\V(p,k),~ \norm{Y}_1 \le \gamma,
\end{eqnarray}
where $\gamma>0$ is the sparsity controlling parameter, 
%(see Section \ref{sxn:tuning} for an appropriate range of this parameter). 
and the columns of $Y$ are PC loadings. $ZBY\T$ is an approximation of $X$.

The fundamental difference between formulation \eqref{eqn:sca1} and previous sparse PCA formulations is that the middle $B$ matrix is not necessarily diagonal. Compared to the diagonal $B$ case, this added flexibility has two merits---(i) it allows PCs to be column sparse and (ii) it allows sparse PCs to explain more variance in the data. 

\subsubsection{SCA presumes column sparsity}\label{sxn:column}
%We first show that our formulation allows arbitrary orthogonal rotations to $Y$. 

Our formulation \eqref{eqn:sca1} presumes the PCs are column sparse. That is, given the subspace of ordinary PCs, there exists an orthogonal rotation, such that after the rotation, the PCs are approximately sparse. 

Let $U D V\T$ be the low-rank singular value decomposition (SVD) of $X$, where $U\in\V(n,k)$ and $V\in\V(p,k)$ contain singular vectors, and $D\in\R^{k \times k}$ is a diagonal matrix with the diagonal entries in decreasing order, and $k\le\min\{n,p\}$ is the rank.
For any two orthogonal matrices $O,R\in\U(k)$, define $Z=UO$, $B=O\T D R$, and $Y=VR$. With these definitions, 
$$X\approx UDV\T = (UO)(O\T DR) (VR)\T = ZBY\T.$$
As such, $ZBY\T$ approximates $X$ as well as $UDV\T$. In particular, the middle $B$ matrix is not diagonal because it absorbs the orthogonal matrices ($O$ and $R$). $Z$ and $Y$ are orthogonally rotated from $U$ and $V$, and both matrices still have orthogonal columns. 
%This means that $ZBY\T$ performs as good as the low-rank SVD in terms of the matrix reconstruction error, $$\norm{X-ZBY\T}\F=\norm{X-U D V\T}\F.$$
Hence, by imposing an $\ell_1$-norm constraint on $Y$ to make it approximately sparse, we presume that there exists at least one orthogonal basis for the column space of $V$ (i.e., the eigenvectors' subspace), which is not necessarily the original coordinate basis, such that the PCs are sparse under that basis. 

\begin{remark}\label{rmk:ordering}
The formulation of SCA does not explicitly defines an ordering for sparse PCs. 
This is because permuting the columns of $Y$, which can be absorbed by the orthogonal matrix $R$, does not change the approximation of $ZBY\T$. 
%In addition, flipping the signs of any columns of $Y$ can also be absorbed by $R$. 
As such, the solution to \eqref{eqn:sca1} is not unique.
In practice (see Section \ref{sxn:simupve}), we sort sparse PCs by the explained variance (EV) of individual PCs, which is defined as $\norm{Xy}_2^2$, where $y\in\R^p$ contains the loadings of a PC. As such, the first sparse PC explains the most variation in the data, and the second PC the second most, etc. 
\end{remark}

\subsubsection{Column sparsity versus row sparsity} \label{sxn:prop}

Column sparsity does not assume the loadings of ordinary PCs (i.e., singular vectors of $X$) to be already approximately sparse; they only need to be so after some orthogonal rotations.
By contrast (or more strictly), row sparse PCA presumes that the loadings of ordinary PCs are by themselves approximately sparse (i.e., the singular vectors align closely with the natural coordinate axes already). 

In SCA, the non-diagonal middle $B$ matrix facilitates the more general formulation of column sparse PCA. 
Specially, if $B$ is restricted to diagonal, the formulation reduces to row sparse PCA.\footnote{This restricted formulation is essentially a low-rank SVD with an additional sparsity constraint on the right singular vectors.} 
The next proposition compares column and row sparse PCA in terms of MRE (the proof is simple and provided in Appendix \ref{sxn:proof} for completeness).
\begin{proposition}[Comparison of row and column sparsity] \label{thm:improve}
	Let $X\in\R^{n \times p}$ be any matrix. Suppose $S_Z\subseteq\R^{n \times k}$ and $S_Y\subseteq\R^{p \times k}$ are the feasible sets for $Z$ and $Y$ respectively, where $k \le \min(n, p)$. Then, subject to $Z \in S_Z$, $Y \in S_Y$, and $D$ is diagonal, it holds that
	$$\min_{Z,B,Y} \norm{X-ZBY\T}\F \le \min_{Z,D,Y} \norm{X-Z D Y\T}\F.$$ 
	In particular, the inequality is strict if $S_Z$ and $S_Y$ are defined in \eqref{eqn:sca1}.
\end{proposition}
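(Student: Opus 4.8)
The plan is to recognize that the right-hand minimization is simply the left-hand minimization with one extra constraint, so the inequality reduces to an inclusion of feasible sets. In formulation \eqref{eqn:sca1} the middle factor $B$ is an unconstrained $k\times k$ matrix, whereas on the right-hand side $D$ is forced to be diagonal, and the feasible sets for $Z$ and $Y$ are literally the same, $S_Z$ and $S_Y$, on both sides. Consequently, every triple $(Z,D,Y)$ admissible for the right-hand problem is also admissible for the left-hand problem after setting $B:=D$, and it attains the identical objective value $\norm{X-ZDY\T}\F=\norm{X-ZBY\T}\F$. Minimizing over the (larger) left-hand feasible set can therefore only lower the objective relative to minimizing over the (smaller) right-hand set, which is exactly the assertion.

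If one wants the sharper, more revealing version that also exhibits the size of the improvement, I would in addition eliminate the middle factor in closed form for fixed $Z,Y$ having orthonormal columns (the case relevant to the SVD construction $X\approx UDV\T$ discussed just before the proposition). For such $Z,Y$ the minimizing unconstrained middle factor is $B^\star=Z\T X Y$, so that $\min_B\norm{X-ZBY\T}\F^2=\norm{X}\F^2-\norm{Z\T X Y}\F^2$; among diagonal matrices the minimizer $D^\star$ has entries $(Z\T X Y)_{ii}$, so that $\min_D\norm{X-ZDY\T}\F^2=\norm{X}\F^2-\norm{D^\star}\F^2$. Since discarding the off-diagonal entries cannot increase the Frobenius norm, $\norm{D^\star}\F\le\norm{Z\T X Y}\F$, the column-sparse reconstruction error is no larger, with equality precisely when $Z\T X Y$ is already diagonal; taking the minimum over $Z\in S_Z$ and $Y\in S_Y$ recovers the stated inequality.

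I do not expect a substantive obstacle here: the content of the proposition is the structural observation that removing the diagonal restriction on the middle factor enlarges the feasible set without altering the constraints on $Z$ and $Y$. The only points that warrant care are (i) making explicit that $B$ is free while $D$ must be diagonal, so the set inclusion holds exactly as written, and (ii) in the closed-form variant, noting the use of orthonormality of the columns of $Z$ and $Y$ (true in the intended application), whereas the bare inequality requires nothing beyond the inclusion and hence holds for arbitrary $S_Z$ and $S_Y$.
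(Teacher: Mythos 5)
Your proposal is correct and follows essentially the same route as the paper: the paper also fixes $Z$ and $Y$ and compares the minimization over an unconstrained $B$ with that over a diagonal $D$, exhibiting the closed-form minimizer $B^*=(Z\T Z)^{+}Z\T XY(Y\T Y)^{+}$, while your feasible-set-inclusion phrasing is simply a cleaner statement of the same underlying fact (and is arguably the more rigorous justification, since the paper's remark that $B^*$ ``is not diagonal in general'' really only speaks to strictness). Your second paragraph, giving $\norm{X}\F^2-\norm{Z\T XY}\F^2$ versus $\norm{X}\F^2-\norm{D^\star}\F^2$ for orthonormal $Z,Y$, matches the computations in the paper's Lemma \ref{lem:translate1} and Remark \ref{rem:pmd}.
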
 
Recall that MRE reflects the unexplained variance in the data. Under the same constraints in \eqref{eqn:sca1}, the left-hand side of the inequality corresponds to the MRE objective of column sparse PCA, and the right-hand-side row sparse one. Proposition \ref{thm:improve} says that the solution to column sparse PCA has an optimal MRE strictly less than that of row sparse PCA. In other words, column sparse PCA can capture more variance in the data than row sparse PCA. 

\begin{remark}\label{rmk:B}
	From a parametric perspective, SCA explains more variance because it uses $k^2 - k$ more parameters in the $B$ matrix.  Relative to the total number of parameters, this is typically a small increase; the $Z$ and $Y$ matrices contain roughly $(n+p)k$ parameters, and typically $k$ is much smaller than $n+p$.  Whether these additional parameters in B are statistically justified must be addressed in a case-by-case basis.  In our limited experience with these techniques, the additional parameters are easily justified because the proportion of variance explained dramatically increases (see Section \ref{sxn:simupve}); the output becomes more stable against initializations, perturbations, and tuning parameters (see Section \ref{sxn:simusbm});  and the estimated factors are easily interpretable (see Sections \ref{sxn:scrnaseq} and \ref{sxn:twitter}).
\end{remark}

\subsection{An algorithm for SCA} \label{sxn:alg}
To solve SCA, the following lemma translates \eqref{eqn:sca1} into an equivalent and more convenient form (the proof can be found in Appendix \ref{sxn:proof}).
\begin{lemma}[Bilinear form of SCA]\label{lem:translate1}
	Solving the minimization in \eqref{eqn:sca1} is equivalent to solving the following maximization problem,
	\begin{eqnarray}\label{eqn:sca2}
		\maxi_{Z,Y} & \norm{Z\T X Y}\F & \st Z\in\V(n,k),~ Y\in\V(p,k),~ \norm{Y}_1 \le \gamma.
	\end{eqnarray}
	In particular, for the optimizer in \eqref{eqn:sca1}, $B=Z\T X Y$.
\end{lemma}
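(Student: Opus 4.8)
The plan is to eliminate the middle matrix $B$ from \eqref{eqn:sca1} by solving the inner (unconstrained) least-squares problem in closed form, and then to recognize the reduced objective as the one in \eqref{eqn:sca2}. First I would note that $B$ is unconstrained and does not appear in any of the constraints, so the feasible set is a product and the minimization factorizes: $\min_{Z,B,Y}\norm{X-ZBY\T}\F = \min_{Z,Y}\min_{B}\norm{X-ZBY\T}\F$, where the outer minimum is over $Z\in\V(n,k)$, $Y\in\V(p,k)$, $\norm{Y}_1\le\gamma$ and the inner one over $B\in\R^{k\times k}$. Since squaring is monotone on nonnegative reals, it suffices to work with the squared objective throughout.

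Next, for fixed $Z,Y$ with orthonormal columns, I would expand
$$\norm{X-ZBY\T}\F^2 = \norm{X}\F^2 - 2\tr\!\big(B\T Z\T X Y\big) + \tr\!\big(B\T Z\T Z\,B\,Y\T Y\big).$$
Using $Z\T Z=I_k$ and $Y\T Y=I_k$, the last term collapses to $\tr(B\T B)=\norm{B}\F^2$, so the right-hand side is a quadratic in $B$ that completes the square as $\norm{X}\F^2 - \norm{Z\T X Y}\F^2 + \norm{B - Z\T X Y}\F^2$. Hence the inner minimum over $B$ is attained uniquely at $B=Z\T X Y$, with optimal value $\norm{X}\F^2 - \norm{Z\T X Y}\F^2$. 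Substituting back, the outer problem becomes $\min_{Z,Y}\big(\norm{X}\F^2 - \norm{Z\T X Y}\F^2\big)$ over the same feasible set; since $\norm{X}\F^2$ is a constant, this is equivalent to $\max_{Z,Y}\norm{Z\T X Y}\F$ subject to $Z\in\V(n,k)$, $Y\in\V(p,k)$, $\norm{Y}_1\le\gamma$, which is exactly \eqref{eqn:sca2}. Reading off the value of $B$ at the joint optimizer gives $B=Z\T X Y$, the final claim.

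I do not expect a real obstacle here; the argument is the standard ``profiling out a linear parameter by completing the square.'' The only points deserving a sentence of care are: (i) justifying the exchange $\min_{Z,B,Y}=\min_{Z,Y}\min_B$, which is valid precisely because $B$ enters neither the constraints nor couples with them; and (ii) observing that the minimizing $B$ is a free $k\times k$ matrix with no imposed structure (in particular not diagonal), which is consistent with the preceding discussion of why \eqref{eqn:sca1} differs from row-sparse PCA.
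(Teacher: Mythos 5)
Your proof is correct and follows essentially the same route as the paper's: both profile out the middle matrix $B$ by expanding $\norm{X-ZBY\T}\F^2=\norm{X}\F^2-2\tr(B\T Z\T XY)+\norm{B}\F^2$ and showing the inner minimum is attained at $B=Z\T XY$ with value $\norm{X}\F^2-\norm{Z\T XY}\F^2$, so that minimizing the reconstruction error is equivalent to maximizing $\norm{Z\T XY}\F$. The only cosmetic difference is that you complete the square while the paper sets the derivative in $B$ to zero; both yield the same optimizer and conclusion.
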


Due to the non-convexity of $\ell_2$-equality constraints ($Z\in\V(n,k)$ and $Y\in\V(p,k)$), the feasible set in \eqref{eqn:sca2} is not convex in general. We replace the feasible set with its convex hull using some $\ell_2$-inequality constraints for simplicity, 
\begin{eqnarray} \label{eqn:sca3}
	\maxi_{Z,Y} & \norm{Z\T X Y}\F & \st Z\in\B(n,k),~ Y\in\B(p,k),~ \norm{Y}_1 \le \gamma.
\end{eqnarray}
Due to the Karush-Kuhn-Tucker conditions \citep[see, e.g.,][]{nocedal2006numerical}, one could expect the solution to fall on the boundary (i.e., $Z\in\V(n,k)$, $Y\in\V(p,k)$, and $\norm{Y}_1 = \gamma$) so long as the sparsity parameters are chosen such that $k \le\gamma\le k\sqrt{p}$\footnote{This is for the set $\{Y\in\R^{p \times k}\mid\norm{Y}_1=\gamma\}$ to intersect with the Stiefel manifold $\V(p,k)$.}.

Algorithm \ref{alg:sca} describes an iterative algorithm that computes sparse PCs as formulated in \eqref{eqn:sca3}. 
The input includes a data matrix $X$, the desired number of sparse PCs $k$, and optionally the sparsity controlling parameters $\gamma$. The algorithm outputs the loadings of $k$ sparse PCs. 
In our experiences, a default value of $\gamma=\sqrt{pk}$ appears to generate robust and interpretable sparse PCs (see, e.g., Section \ref{sxn:simusbm}). We discuss a data-driven method of tuning the sparsity parameters in Supplementary Section S1. %\ref{sxn:tuning}. 
In general, Algorithm \ref{alg:sca} does not necessarily converge to a global optimum for \eqref{eqn:sca3}; however, our empirical studies indicate that the algorithm does converge to interpretable factors for appropriate choices of the sparsity parameters. Note that each iteration results in a decrease in the objective.

The SCA algorithm initializes $Z\in\V(n,k)$ and $Y\in\V(p,k)$ with the top $k$ left and right singular vectors of $X$ respectively.
Once initialized, the algorithm alternatively updates $Z$ and $Y$; fixing one and optimizing the other until convergence.
The iteration is because the objective function is bilinear in $Z$ and $Y$, allowing for fast updates. 
Specifically, with $Y$ fixed, \eqref{eqn:sca3} takes the form 
\begin{eqnarray} \label{eqn:sca4z}
	\maxi_{Z} & \norm{Z\T X Y}\F & \st Z\in\B(n,k).
\end{eqnarray}
With $Z$ fixed, \eqref{eqn:sca3} takes the form 
\begin{eqnarray} \label{eqn:sca4y}
	\maxi_{Y} & \norm{Z\T X Y}\F & \st Y\in\B(p,k),~ \norm{Y}_1\le \gamma.
\end{eqnarray}

\begin{algorithm}
	\DontPrintSemicolon
	\caption{Polar-Rotate-Shrink (PRS)}
	\label{alg:prs}
	\KwData{$A\in\R^{p \times k}$,\\
		\Indp\Indp sparsity parameter $\gamma$ (optional, default to $\sqrt{pk}$)\;}
	\textbf{Procedure} $\prs{A}$\textbf{:}\;
	\Indp
	%	$\tilde{Y}\leftarrow\polar(A)$\;
	$\tilde{Y}\leftarrow$ left singular vectors of $A$\;
	$Y^*\leftarrow$ rotate $\tilde{Y}$ with varimax\tcp*{Section \ref{sxn:rotation}}
	$\hat{Y} \leftarrow$ soft-threshold $Y^*$ with parameter $\gamma$\;
	\Indm
	\KwResult{$\hat{Y}$}
\end{algorithm}

\begin{algorithm}
	\DontPrintSemicolon
	\caption{Sparse Component Analysis (SCA)}
	\label{alg:sca}
	\KwData{Data matrix $X$ and a number of components $k$}
	\textbf{Procedure} $\sca{X,k}$\textbf{:}\;
	\Indp
	Initialize $\hat{Z}$ and $\hat{Y}$ with the top $k$ left and right singular vectors of $X$\;
	\Repeat{convergence}
	{
		$\hat{Y} \leftarrow\prs{X\T \hat{Z}}$\tcp*{Algorithm \ref{alg:prs}}
		$\hat{Z} \leftarrow \polar(X \hat{Y})$\tcp*{Lemma \ref{lem:unconstrained}}
	}
	\Indm
	\KwResult{Sparse loadings $\hat{Y}$}
\end{algorithm}

\subsubsection{Update $Z$ fixing $Y$}
The update of $Z$ fixing $Y$ in \eqref{eqn:sca4z} is algebraic. The following lemma provides a set of solutions to \eqref{eqn:sca4z}, which is extended from Theorem 7.3.2 in \citet{horn1985matrix} (the proof is included in Appendix \ref{sxn:proof} for completeness).
\begin{lemma}[Maximization without sparsity constraint] \label{lem:unconstrained}
	Given a full-rank matrix $X\in\R^{n\times p}$, with $p\le n$, let the singular values of $X$ be $\sigma_i$ for $i=1,2,...,p$. Then,
	\begin{eqnarray*}
		\max_{Y\in\V(n, p)} & \norm{ X\T Y }\F ~=  & \sum_{i=1}^p \sigma_i
	\end{eqnarray*}
	with the maximizer $Y^*=\polar(X)$, up to any orthogonal rotation from the right. Here, $\polar(X)=X(X\T X)^{-1/2}$ is the \textit{polar} of $X$.
\end{lemma}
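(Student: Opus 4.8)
The plan is to reduce everything to the singular value decomposition of $X$ and to recognize the statement as the classical extremal characterization of the polar factor that underlies the polar–decomposition material in \citet[Ch.~7]{horn1985matrix}. Note first that for the right–hand side to equal $\sum_{i=1}^p\sigma_i$ one should read the norm on the left as the nuclear norm $\norm{X\T Y}_\ast$ (equivalently, $\tr(X\T Y)$ once $X\T Y$ has been rendered positive semidefinite), and I work with that reading. Write the thin SVD $X=UDV\T$ with $U\in\V(n,p)$, $V\in\U(p)$, and $D=\operatorname{diag}(\sigma_1,\dots,\sigma_p)$. Since $X$ has full rank, every $\sigma_i>0$, so $X\T X=VD^2V\T$ is invertible, $(X\T X)^{-1/2}=VD^{-1}V\T$, and $\polar(X)=X(X\T X)^{-1/2}=UV\T$.

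For the upper bound, take any feasible $Y\in\V(n,p)$; since $Y\T Y=I_p$ we have $\norm{Y}_{\mathrm{op}}=1$, so submultiplicativity of singular values gives $\sigma_i(X\T Y)\le\sigma_i(X\T)\norm{Y}_{\mathrm{op}}=\sigma_i$ for each $i$, and summing yields $\norm{X\T Y}_\ast\le\sum_{i=1}^p\sigma_i$. For attainment, substitute $Y^\ast=\polar(X)=UV\T$: then $X\T Y^\ast=VDU\T UV\T=VDV\T$, which is symmetric positive semidefinite with eigenvalues $\sigma_1,\dots,\sigma_p$, so $\norm{X\T Y^\ast}_\ast=\tr(VDV\T)=\sum_{i=1}^p\sigma_i$ and the bound is met. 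Hence the maximum equals $\sum_{i=1}^p\sigma_i$, attained at $\polar(X)$; and because $\norm{\cdot}_\ast$ is invariant under right multiplication by any $R\in\U(p)$ and $X\T(Y^\ast R)=(X\T Y^\ast)R$, every $\polar(X)R$ is a maximizer as well.

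Each inequality above is a one–liner, so there is no single hard step; the place that needs the most care is the sharper ``these are precisely the maximizers'' reading of the ``up to a right rotation'' clause, which is delicate when $n>p$ because there $Y$ is only a partial isometry and cannot simply be called orthogonal. There I would argue as follows: equality $\norm{X\T Y}_\ast=\sum_i\sigma_i$ forces $\sigma_i(X\T Y)=\sigma_i$ for every $i$ (each term was already $\le\sigma_i$); writing $W:=U\T Y$ so that $X\T Y=VDW$, this gives $|\det W|=|\det(DW)|/|\det D|=1$, and a $p\times p$ contraction (indeed $W\T W=Y\T(UU\T)Y\preceq Y\T Y=I_p$) with $|\det W|=1$ must be orthogonal. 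Then, decomposing $Y=UW+(I-UU\T)Y$ with the two pieces orthogonal (since $U\T(I-UU\T)=0$), $I_p=Y\T Y=W\T W+((I-UU\T)Y)\T((I-UU\T)Y)$, and as $W$ is orthogonal this forces $(I-UU\T)Y=0$, i.e.\ $Y=UW$. So the maximizers are exactly $\{UW:W\in\U(p)\}=\{\polar(X)R:R\in\U(p)\}$, which coincides with $\U(p)$ precisely when $n=p$.
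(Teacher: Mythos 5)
Your proof is correct, but it takes a genuinely different route from the paper's, and it also quietly fixes an inconsistency in the statement that the paper's own proof leaves unresolved. The paper's argument stays with the Frobenius objective: writing the thin SVD $X=UDV\T$, it expands $\norm{X\T Y}\F^2=\tr\left(Y\T UD^2U\T Y\right)$ and invokes the standard Ky Fan/Rayleigh-type fact that this trace quadratic form over $\V(n,p)$ is maximized exactly when the columns of $Y$ span the top left-singular subspace, i.e.\ $Y=UR$ for $R\in\U(p)$, and with $R=V\T$ (in your notation) one recovers $\polar(X)=UV\T$. That route is a one-liner and matches the objective actually used in the SCA updates, but it never verifies the displayed value: under the Frobenius reading the maximum is $\bigl(\sum_{i=1}^p\sigma_i^2\bigr)^{1/2}$, not $\sum_{i=1}^p\sigma_i$, so the stated value is consistent only with the nuclear-norm reading you adopted (which is also what the cited Horn--Johnson theorem addresses). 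Your approach buys three things the paper's does not: the stated value $\sum_i\sigma_i$ comes out exactly; the upper bound is obtained cleanly from $\sigma_i(X\T Y)\le\sigma_i(X)\norm{Y}_{\mathrm{op}}$; and you actually prove the ``up to a right rotation'' clause for $n>p$ via the determinant/contraction argument ($W=U\T Y$ is a contraction with $|\det W|=1$, hence orthogonal, hence $Y=UW$), whereas the paper merely asserts it. Conversely, the paper's Frobenius route is the one that literally justifies the algorithmic step $\hat Z=\polar(X\hat Y)$, since the algorithm maximizes $\norm{Z\T XY}\F$; happily the two readings have the same maximizer set here (because $X$ has full rank $p$), so your proof supports the same downstream use. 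It would be worth stating explicitly in a revision which norm is intended, since value and norm cannot both stand as written.
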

Due to Lemma \ref{lem:unconstrained}, the SCA algorithm updates $Z$ with the polar of $XY$, $\hat{Z}=\polar(XY)$, which can be computed in $\order(nk)$ time \citep{journee2010generalized}. 

\subsubsection{Update $Y$ fixing $Z$}
To update $Y$ fixing $Z$, we start by solving the non-sparse version of \eqref{eqn:sca4y} (i.e., remove the sparsity constraint $\norm{Y}_1\le \gamma$), 
\begin{eqnarray} \label{eqn:sca4y_unconstrained}
	\maxi_Y & \norm{Z\T X Y}\F & \st Y\in\B(p,k).
\end{eqnarray}
Let $\tilde{Y}=\polar(X\T Z)$. Then, $\tilde{Y}$ is one element in the subspace of the solutions to \eqref{eqn:sca4y_unconstrained}.
Before imposing the sparsity constraint, we look for an orthogonal rotation $R$ to $\tilde{Y}$ to minimize ${\|\tilde{Y}R\|}_1$. 
However, $\norm{Y}_1$ is not a smooth function of $Y$ if it contains at least one zero entry, entailing the complications of defining sub-gradients.
Alternatively, the SCA algorithm minimizes a smoother criterion based on the $\ell_{4/3}$ norm:
\begin{eqnarray} \label{eqn:sca4y_varimax}
	\mini_R & \norm{\tilde{Y}R}_{\frac{4}{3}} & \st R\in\U(k).
\end{eqnarray}
This sub-problem leads to the varimax rotation (see Section \ref{sxn:rotation}) that is widely applied in factor analysis \citep{kaiser1958varimax}.
We denote $Y^*=\tilde{Y}R^*$ to be the orthogonally rotated solution to \eqref{eqn:sca4y_unconstrained}, where $R^*$ is the solution to \eqref{eqn:sca4y_varimax}.
Finally, considering the $\ell_1$-norm sparsity constraint, we apply the element-wise soft-thresholding of $Y^*$ with the sparsity parameter $\gamma$, which is defined as \citep{donoho1995noising, tibshirani1996regression} 
\begin{equation} \label{eqn:soft}
	\left[T_\gamma(Y^*)\right]_{ij} = \sign(Y^*_{ij}) \cdot \left(|Y^*_{ij}|-t\right)_+,
\end{equation} 
where $t>0$ is the threshold determined by the equation $\norm{T_\gamma(Y^*)}_1=\gamma$, and $x_+$ equals $x$ if $x > 0$ or 0 otherwise.
We discuss several properties of soft-thresholding in Supplementary Section S2. %\ref{sxn:soft}. 
In summary, the update of $Y$ given $Z$ consists of three steps that we call ``Polar-Rotate-Shrink'' (PRS, Algorithm \ref{alg:prs})---first, compute a solution to the unconstrained problem \eqref{eqn:sca4y_unconstrained}; second, rotate with varimax; third, soft-threshold all of the elements.\footnote{More investigation is needed in order to understand the statistical properties of PRS. For example, in a recent paper \citep{rohe2020vintage}, we showed that PCA with the varimax rotation is a consistent estimator for a broad class of modern factor models, that includes the degree corrected stochastic block model \citep{karrer2011stochastic}.}

\subsubsection{Orthogonal rotations: Varimax and quartimax} \label{sxn:rotation}
For any matrix $A\in\R^{p \times k}$, the \textit{varimax criterion} is defined as the sum of column (sample) variance of squared elements ($A_{ij}^2$) \citep{kaiser1958varimax}:
\begin{equation*} 
	C_\text{varimax}(A) = \sum_{j=1}^k\left[\frac{1}{p}\sum_{i=1}^p A_{ij}^4-\frac{1}{p^2}\left(\sum_{i=1}^p A_{ij}^2\right)^2\right].
\end{equation*}
For a fixed matrix $Y\in\R^{p \times k}$, the \textit{varimax rotation} seeks an orthogonal rotation $R\in\R^{k \times k}$ to maximize the varimax criterion evaluated at $YR$, 
\begin{eqnarray}\label{eqn:varimax}
	\maxi_R & C_\text{varimax}(YR) & \st R\in\U(k).
\end{eqnarray}
It is commonly used in factor analysis for producing nearly sparse and interpretable loadings of PCs, especially in the psychology literature. 
The varimax rotation is easy to compute; for example, the base function \texttt{varimax} in \texttt{R} implements a gradient projection algorithm of it \citep{bernaards2005gradient}.
\citet{jennrich2001simple} showed that the gradient projection algorithm converges to a local optimum from any starting point and enjoys geometric (or linear) convergence rate. 

The varimax criterion naturally links to the $\ell_{4/3}$-norm objective function in \eqref{eqn:sca4y_varimax}.
Since $Y\in\V(p,k)$, the columns of $Y$ have unit length. Hence, $\sum_{i=1}^p Y_{ij}^2=1$, and the varimax criterion reduces to a simpler form (also known as the \textit{quartimax} criterion as introduced by \citet{carroll1953analytical}) up to an additive constant:
\begin{equation*} \label{eqn:quartimax}
	C_\text{quartimax}(Y) = \sum_{i=1}^p \sum_{j=1}^k Y_{ij}^4=\norm{Y}_4^4,
\end{equation*}
which is the $\ell_4$-norm of $Y$ to the power of 4.
Next, by the H\"older's inequality (using the H\"older conjugates $4/3$ and $4$) and the power mean inequality (and that $\norm{Y}\F=\sqrt{k}$), 
%$$k=\norm{Z}\F \le\norm{Z}_{p} \norm{Z}_{q}$$
$\norm{Y}_\frac{4}{3} {\norm{Y}_{4}} \ge \norm{Y}_1 \ge \norm{Y}_F =\sqrt{k}.$
%By the H\"older's (power) mean inequality, that is, $\sqrt{pk}\norm{Y}_2\le \norm{Y}_{1}\le\norm{Y}_2$, for any $Y\in\R^{p\times k}$. 
This implies that maximizing the varimax criterion is the dual problem of minimizing the $\ell_{4/3}$-norm objective.
Hence, to update $Y$ in the algorithm of SCA, we invoke the varimax rotation in \eqref{eqn:varimax} as a proxy of \eqref{eqn:sca4y_varimax}.

\begin{remark}\label{rmk:absmin}
Besides varimax, we experimented the orthogonal rotation that directly minimizes the $\ell_1$ norm, which we call the ``\textit{absmin}'' rotation: 
\begin{eqnarray}\label{eqn:absmin}
	\mini_R & \norm{YR}_1 & \st R\in\U(k).
\end{eqnarray}
However, the objective function is not smooth at those $R$ where $YR$ contains at least one zero element; this posts challenges to solving \eqref{eqn:absmin}. 
For example, we tried a gradient projection algorithm using the gradient direction $Y\T\sign(YR)$, where $\sign(\cdot)$ is the element-wise sign function, yet the algorithm hardly converges. 
It is worth noting that in our limited experiments, where we used the absmin rotation but only allowed fifteen iterations of this gradient projection algorithm, we obtained marginally better solutions, in terms of explained variance, than using the varimax rotation (see Section \ref{sxn:simupve}). 
It is of future interest to investigate alternative orthogonal rotations that are easy to compute and can generate approximately sparse structure. 
\end{remark}

\subsection{Sparse matrix approximation} \label{sxn:sma}

In the SCA algorithm above, a sparsity constraint can also be applied to $Z$, in addition to $Y$. We call this sparse matrix approximation (SMA).
We define SMA as the solution to a matrix reconstruction error minimization problem:
\begin{eqnarray}\label{eqn:sma1}
	\mini_{Z,B,Y} & & \norm{X- Z B Y\T}\F \\ 
	\st & & Z\in\B(n,k),~ \cP_1(Z) \le \gamma_z,~ \nonumber \\
	& & Y\in\B(p,k),~ \cP_2(Y) \le \gamma_y,\nonumber
\end{eqnarray}
where $\gamma_z>0$ and $\gamma_y>0$ are the sparsity controlling parameters, and $\cP_1$ and $\cP_2$ are some \textit{penalty} functions that promote sparsity. 
%Our SCA formulation in \eqref{eqn:sca3} can be viewed as a special case of the SMA in \eqref{eqn:sma1}, where the sparsity constraint on $Z$ is omitted. 
If $\gamma_Z$ is so large that $\cP_1(Z) \le \gamma_z$ is always satisfied, then \eqref{eqn:sma1} is equivalent to SCA.
Similar to Lemma \ref{lem:translate1}, we transform \eqref{eqn:sma1} into an equivalent and more convenient form (the proof is almost identical to that of Lemma \ref{lem:translate1} thus is omitted),
\begin{eqnarray}\label{eqn:sma2}
	\maxi_{Z,Y} & & \norm{Z\T X Y}\F \\ 
	\st & & Z\in\B(n,k),~ \cP_1(Z) \le \gamma_z,~  \nonumber \\
	& & Y\in\B(p,k),~ \cP_2(Y) \le \gamma_y.\nonumber
\end{eqnarray}
The two criteria in \eqref{eqn:sma1} and \eqref{eqn:sma2} are equivalent if and only if $B=Z\T X Y$. 
We interpret $B$ as the ``\textit{score}'' of SMA, since the solution to \eqref{eqn:sma1} maximizes the sum of squares of its elements, $\sum_{i,j}B_{ij}^2$. 
It is also worth noting that the squared matrix reconstruction error equals to $\norm{X}\F^2-\norm{B}\F^2$ (see the proof of Lemma \ref{lem:translate1}).

Since SMA is a simple extension from SCA, we extend Algorithm \ref{alg:sca} for SMA in Algorithm \ref{alg:sma}, where we apply PRS to $Z$ in addition to $Y$.  
The output includes the estimated $Z$, $B$, and $Y$.

\begin{algorithm}
	\DontPrintSemicolon
	\caption{Sparse Matrix Approximation (SMA) with $\cP_1\left(A\right) = \cP_2\left(A\right) = \norm{A}_1$.}
	\label{alg:sma}
	\KwData{data matrix $X\in\R^{n \times p}$ and the approximation rank $k$}
	\textbf{Procedure} $\sma{X, k}$\textbf{:}\;
	\Indp
	Initialize $\hat{Z}$ and $\hat{Y}$ with the top $k$ left and right singular vectors of $X$\;
	\Repeat{convergence}
	{
		$\hat{Z}\leftarrow\prs{X\hat{Y}}$\tcp*{Algorithm \ref{alg:prs}}
		$\hat{Y}\leftarrow\prs{X\T\hat{Z}}$\tcp*{Algorithm \ref{alg:prs}}
	}
	$\hat{B} \leftarrow \hat{Z}\T X \hat{Y}$\;
	\Indm
	\KwResult{$\hat{Z}$, $\hat{B}$, and $\hat{Y}$}
\end{algorithm}

%\subsubsection{SMA builds on \citet{witten2009penalized}}
We highlight that SMA generalizes the popular penalized matrix decomposition (PMD) proposed by \citet{witten2009penalized}, which is also similar to the method of \citet{shen2008sparse}.
The PMD also approximates a data matrix $X\in\R^{n \times p}$ by the product of three matrices, $ZDY\T$, where $Z\in\V(n,k)$ and $Y\in\V(p,k)$ are presumed sparse, and $D\in\R^{k \times k}$ is a diagonal matrix whose diagonal entries are in decreasing order, and $k$ is the rank of the matrix approximation.
For sparsity, PMD applies penalty functions to $Z$ and $Y$, leading to the matrix reconstruction error minimization formulation of PMD:\footnote{The paper originally considers the PMD with $k=1$. The PMD finds multiple factors sequentially using a deflation technique.}
\begin{eqnarray*}\label{eqn:pmd1}
	\mini_{U,D,V} & & \norm{X-ZDY\T}\F \\
	\st & & Z\in\B(n,k),~ \cP_1(Z) \le \gamma_z,~ \nonumber \\ 
	& & Y\in\B(p,k),~ \cP_2(Y) \le \gamma_y,~ \nonumber\\
	& & D \text{ is diagonal,}\nonumber
\end{eqnarray*}
where $\gamma_z,\gamma_y>0$ are parameters that control the sparsity of $Z$ and $Y$, and $\cP_1$ and $\cP_2$ are some convex penalty function (e.g. $\ell_1$-norm). 

The single difference between SMA and PMD is the the diagonal constraint on the middle matrix. In this way, SMA generalizes PMD, because, SMA estimates $k^2-k$ more parameters in $B$ than PMD (see Remark \ref{rmk:B}).
Proposition \ref{thm:improve} suggests that the reconstruction error of SMA is less or equal to that of PMD (see also Remark \ref{rem:pmd} in Appendix \ref{sxn:proof}). 
Algorithmically, in order to compute PMD, \citet{witten2009penalized} proposed to find the solution by sequentially maximizing $B_{ii}$ for $i=1,2,...,k$ (recall that $B=Z\T X Y$). By contrast, solving the SMA in \eqref{eqn:sma2} amounts to maximizing the entirety of the score matrix, that is, $\norm{B}\F$.

\section{Connections to existing methods} \label{sxn:connection}

As mentioned in Section \ref{sxn:contrib}, SCA is related to factor analysis in that they both use a rotation. One key difference is that the sparsity constraint in SCA creates actual zeros. In this section, we compare SCA with several existing methods of sparse PCA. Then, we introduce two existing data processing techniques that are related to SCA. 

\subsection{Existing sparse PCA methods} 

The formulation of SCA is akin to multiple existing sparse PCA formulations. However, the possibility of orthogonal rotations has not been explored thoroughly, despite the plethora of available methods. In this section, we elucidate these connections and point to some differences. 

\begin{description}
	%\item[SCoTLASS] \citet{jolliffe2003modified} and \citet{trendafilov2006projected} proposed to maximize variance explained by PCs,
	%\begin{eqnarray*} \label{eqn:scotlass}
	%	\maxi_{V} &  \norm{X V }^2\F &\st V\in\V(p,k), \norm{V}_1 \le \gamma.
	%\end{eqnarray*}
	%The SMA considers additionally an orthogonal rotation in designing the solver.
	
	\item[SPCA \citep{zou2006sparse}] SPCA is motivated to maximize the explained variance in the data \citep{jolliffe2003modified}. The formulation of SPCA minimizes a ``residual sum of squares plus penalties'' type of criterion,
	\begin{eqnarray*} \label{eqn:spca}
		\mini_{U,V} &  \norm{X - XVU\T }^2\F+\lambda_1\norm{V}\F^2+\sum_{j=1}^{k}\lambda_{2,j}\norm{v_j}_1 &\st U\in\V(p,k),
	\end{eqnarray*}
	where $v_j$ is the $j$th column of $V\in\R^{p\times k}$ containing the sparse loadings of the $j$th PC, and $\lambda_1$ and $\lambda_{2,j}$ are tuning parameters. 
	In this formulation, the first and the third terms are not invariant to orthogonal rotations (on $V$). Specially, the first term $\norm{X - XVU\T}\F^2$ is minimized when $V$ corresponds to the $k$ ordinary PCs. Based on this, \citet{zou2006sparse} shows that the algorithm of SPCA searches for a sparse approximation of the ordinary PCs, yet without sparsity-enabling orthogonal rotations (i.e., it assumes row sparsity).
	
	\item[SPC \citep{witten2009penalized}] SPC finds one sparse PC at a time,
	\begin{eqnarray} \label{eqn:spc}
		\maxi_{u,v} & u_i\T X v_i &\st \norm{u_i}_2=1,~ \norm{v_i}_2=1,~\norm{v_i}_1\le\gamma,
	\end{eqnarray}
	where $v_i\in\R^p$ contains the loadings of the $i$th sparse PC, for $1\le i \le k$. 
	When $k=1$, our formulation of SCA in \eqref{eqn:sca2} takes the same form as the SPC formulation, where an orthogonal rotation is unnecessary. 
	When $k>1$, however, SPC searches for sparse PCs sequentially and does not rotate PCs, unlike SCA, which computes $k$ sparse PCs simultaneously. 
	SPC is similar to the rSVD proposed by \citet{shen2008sparse} and the TPower proposed by \citet{yuan2013truncated} in that all the three methods rely on a deflation technique for multiple PCs. 
	This technique entails complications of, for example, non-orthogonality and sub-optimality \citep{mackey2008deflation}.
	More generally, these methods can each be viewed as a special case of the following GPower formulation.
	
	\item[GPower \citep{journee2010generalized}] GPower has a ``block version'' that computes multiple sparse PCs simultaneously by considering a linear combination of individual sparse PCA (as formulated in SPC),
	\begin{eqnarray*} \label{eqn:gpower}
		\maxi_{U,V} &  \sum_{j=1}^k \mu_j u_{j}\T X v_{j} - \sum_j\lambda_j\norm{v_{j}}_1 &\st U\in\B(n,k),~ V\in\V(p,k),
	\end{eqnarray*}
	where $V$ contains the PC loadings, and $u_j$ and $v_j$ are the $j$th column of $U$ and $V$ respectively, and $\mu_j$ is the weight for the $j$th sparse PC, and $\lambda_j$ is the sparsity tuning parameter for the $j$th sparse PC.
	The algorithm of GPower fundamentally deals with sparse PCs individually, which prohibits orthogonal rotations (on $V$). 
	
	\item[SPCArt \citep{hu2016sparse}] SPCArt is the first (to our knowledge) sparse PCA method that concerns orthogonal rotations in its formulation. It searches for sparse PCs by directly approximating the singular vectors (as opposed to minimizing the reconstruction error or maximizing the explained variance),
	\begin{eqnarray*} \label{eqn:spcart}
		\mini_{Y,R} &  \norm{V - YR}^2\F+\lambda\norm{Y}_1 &\st Y\in\V(p,k),~ R\in\U(k),
	\end{eqnarray*}
	where $V\in\V(p,k)$ contains the top $k$ singular vectors of $X$, and $Y$ contains the sparse loadings.
	Conceptually, introducing an orthogonal rotation ($R$) allows a larger searching space for $Y$. 
	However, the algorithm of SPCArt does not specifically update $R$ to promote sparsity (e.g., minimize $\norm{Y}_1$ as in SCA); instead, SPCArt simply computes $R$ so as to align the polar of $V$ and $Y$ (i.e., $\hat{R}=\polar(Y\T V)$). 
	As such, the performance of SPCArt could be sensitive to the initialization of $Y$. Empirically, SPCArt yields results that are nearly comparable to the GPower based method, as concluded by the authors. 
	%XXXX not better than GPower XXXX
\end{description}

\subsection{Sparse coding and independent component analysis}
Sparse coding concerns low-rank representations of individual samples. 
We view it as a variant of PCA, where we presume the component scores to be sparse. 
Recall that the scores are the representations of individual data points in $\R^k$, where $k$ is the number of PCs. 
In particular, presuming sparse scores implies that each data point is correlated with only a small subset of PCs. 
Sparse coding is useful to generate simple representations of individual date points, and the basis of such representations (i.e., PCs) usually provide scientific insights.
For example, sparse coding of natural images recovers the common understanding of how the primary visual cortex in mammalian perceives scenes (see Section \ref{sxn:coding} for an example).

The SCA algorithm can be used to solve sparse coding. 
This is because, similar to SCA, sparse coding can be viewed as a special case of the SMA problem. To see this, simply omit the sparsity constraint on $Y$ in \eqref{eqn:sma1},
\begin{eqnarray*}
	\mini_{Z,B,Y} & & \norm{X- Z B Y\T}\F \\ 
	\st & & Z\in\B(n,k),~ Y\in\B(p,k),~ \cP_1(Z) \le \gamma_z\nonumber 
\end{eqnarray*}
Here, $Z$ contains the sparse scores, and $BY\T$ contains the basis of sparse coding.
To solve sparse coding, we apply the SCA algorithm (Algorithm \ref{alg:sca}) to the transposed data matrix, $X\T$. In doing this, the output of the algorithm is actually an estimate of sparse component scores for the original data matrix. 

More broadly, independent component analysis (ICA) is widely applied for sparse coding in the signal processing literature. 
Despite the different motivations, sparse PCA on a transposed data matrix appears to perform very similarly to sparse ICA on the original data. We elaborate on this in Supplementary Section S3 %\ref{sxn:ica} 
and apply SCA to blind source separation of images.

\section{Simulation studies} \label{sxn:simu}
In this section, we compare several sparse PCA methods using simulated data. 
Specifically, we focused on (1) their ability of explaining variance in the data, (2) the robustness against varying sparsity parameters, and (3) the computational speed. 
We selected SPCA, SPC, GPower, the SPCAvRP method recently proposed by \citet{gataric2020sparse}, SCA, and another variant of SCA which deploys the absmin rotation (SCA-absmin, see Remark \ref{rmk:absmin} of Section \ref{sxn:rotation}). For SCA and SCA-absmin, we implemented the algorithms in \texttt{R}.\footnote{We provide an \texttt{R} package \texttt{epca}, for \textbf{e}xploratory \textbf{p}rincipal \textbf{c}omponent \textbf{a}nalysis, which implements SCA and SMA with various algorithmic options. The package is available from CRAN (\url{https://CRAN.R-project.org/package=epca}).}
For SPCA, SPC, and SPCAvRP, we invoked the original \texttt{R} packages \texttt{elasticnet}, \texttt{PMA}, and \texttt{SPCAvRP} respectively.
The implementation of GPower (in \texttt{MATLAB}) was obtained from the authors' website.
For all the iterative methods, we specified maximum number of iterations to 1,000 and the stopping (convergence) criterion to $10^{-5}$.
Overall, our numerical experiments showed that the SCA algorithm converges faster and produces more robust sparse PCs that capture a larger amount of variance in the data.

\subsection{Proportion of variance explained} \label{sxn:simupve}
In this simulation, we compared the abilities of sparse PCA methods in explaining variance in the data. 
To this end, we simulated $30$ data matrices with $n=100$ observations and $p=100$ variables from the following low-rank generative model:
$$X=SY\T+E,$$
where $S\in\R^{100 \times 16}$ contains the component scores, and $Y\in\R^{100 \times 16}$ contains the loadings of sparse PCs, and $E\in\R^{100 \times 100}$ is some noise.
To generate $S$, we randomly sampled $U\in\V(100,16)$ and $V\in\U(16)$ and set $S=U \Sigma V\T$, where $\Sigma$ is a diagonal matrix with the diagonals $\sigma_l=10-\sqrt{l}$ for $l=1,2,...,16$.
To simulate a sparse $Y$, we took a random element from $\V(100,16)$, then soft-threshold its elements with sparsity parameter $\gamma=20$ (i.e., $T_{20}$ as defined in Equation \eqref{eqn:soft}).\footnote{We also experimented with $\gamma=\sqrt{pk}=40$. The results are comparable.}
% (as per the $\gamma = \sqrt{pk}$ rule). 
Note that, it is unnecessary to re-scale the columns of loadings to unit length, because the column of $S$ can absorb these scalars.
Lastly, the elements in $E$ were drawn independently from the normal distribution, $E_{ij} {\sim} \text{N}(0,0.1^2)$. 

We applied the six sparse PCA methods to each simulated data matrix $X$ with $k=2,4,6,...,16$. 
For each $k$, we imposed the same $\ell_{1}$-norm constraint on the sparse loadings for all methods. Specifically, for SCA, and SPC, we directly configured the sparsity controlling parameters to $2.5k$.\footnote{The coefficient 2.5 is calculated from $\lambda/16$, assuming that the 16 sparse PCs have equally distributed $\ell_1$-norm.} As for SPCA, GPower and SPCAvRP, to ensure a fair comparison, we tuned the parameters using binary search such that the returned loadings all have the same $\ell_1$ norm of $2.5k$. 
To evaluate sparse PCs, we define the cumulative proportion of variance explained (PVE) by the first $k$ sparse PCs as $\norm{X_Y}\F^2$, where $X_Y=XY(Y\T Y)^{-1}Y\T$ \citep{shen2008sparse}. 
Note that the PVE by sparse PCs is upper bounded by that of ordinary PCs (no sparsity constraint). Therefore, we also applied PCA to $X$ for comparison.
Figure \ref{fig:4_pve} displays the mean PVE for different PCA methods, varying the requested number of PCs from 2 to 16. It can be seen that SPCAvRP and SPCA explained less than half of the PVE by PCA, and that GPower and SPC both exhibited some improvements over SPCA. For GPower, we tested both the single-unit and the block versions, but the block version often converged to a defective solution with some columns decaying to all zeros. This happened when the number of targeted PCs went above 5 in this simulation. As such, we display only the single-unit version of the results. Overall, SCA performed the best among sparse PCA methods and were the closest to PCA. 
In addition, the SCA algorithm converged with fewer iterations than the other sparse PCA methods (see Table \ref{tab:iter} for a comparison when $k=16$). 
We also observed that using the varimax rotation (SCA), the algorithm was more computationally efficient than using the absmin rotation (SCA-absmin).

\begin{figure}
	\centering
	\includegraphics[width=0.6\linewidth]{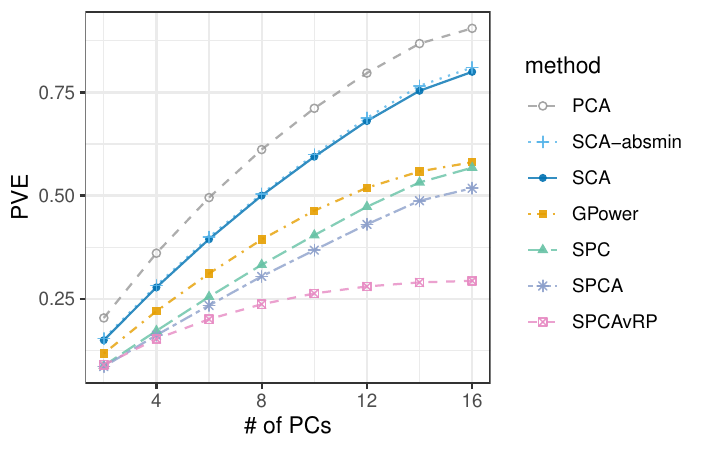}
	\caption{
		Comparisons of sparse PCA methods using simulated data.
		The proportion of variance explained (PVE) by sparse principal components (PCs) with the number of targeted PCs varying from 2 to 16. 
	}
	\label{fig:4_pve}
\end{figure}

\begin{table}
\centering
\begin{tabular}{|c|c|c|c|}
	\hline
	\textbf{Method} & \textbf{\# of iterations} & \textbf{Mean run time (s)} & \textbf{Environment} \\
	\hline
	SCA & 10 $\sim$ 65 (all PCs) & 0.96 & \texttt{R} \\
	SPC & 25 $\sim$ 1,000 (each PC) & 1.21 & \texttt{R} \\
	GPower & 30 $\sim$ 150 (each PC) & 0.19 & \texttt{MATLAB} \\
	SPCA & 470 $\sim$ 920 (all PCs) & 56.30 & \texttt{R} \\
	SPCAvRP & / & 28.67 & \texttt{R}\\
	SCA-absmin & / & 23.5 & \texttt{R} \\
	\hline
\end{tabular}
\caption{
	Comparison of the computational efficiency of sparse PCA methods. 
	Each method is tasked to find 16 PCs on a single CPU (2.50GHz).
	SPCAvRPs is not iterative (yet is parallelizable), hence the number of iterations is not applicable. 
	The absmin rotation is less efficient, so we halted the algorithm of \textsc{SCA-absmin} after the 15th iteration.
}
\label{tab:iter}
\end{table}

\subsection{Robustness against tuning parameters} \label{sxn:simusbm}

This simulation study investigates the robustness of sparse PCA to the choice of sparsity parameters. 
For this, we applied sparse PCA to detect communities in networks (or graph partitioning) \citep[see, e.g.,][]{fortunato2010community}, using the graph adjacency matrix (see the definition below) as input. 
This application is possible thanks to the recent consistency results \citep{rohe2020vintage} showing that under the stochastic block model \citep[SBM, see for example][]{holland1983stochastic}, the support of each sparse PC estimates the membership (indicator) of one community. 
Hence, we could evaluate sparse PCs by examining their support. 

We simulated 30 undirected graphs with $n=900$ nodes and four equally sized blocks from the SBM. Under the SBM, the edge between node $i$ and $j$ is sampled from the Bernoulli distribution, $\text{Bernoulli}(B_{z(i),z(j)})$, where $z(i)\in\{1,2,3,4\}$ is the membership of node $i$, and 
$$B=0.05\times\begin{bmatrix}
	0.6& 0.2& 0.1& 0.1\\
	0.2& 0.7& 0.05& 0.05\\
	0.1& 0.05& 0.6& 0.25\\
	0.1& 0.05& 0.25& 0.6
\end{bmatrix}$$
is the block connectivity matrix. Under this setting, the expected number of edges connected to each node is 45. 
For each simulated graph, we defined the adjacency matrix $A\in\{0,1\}^{n \times n}$ with $A_{ij}=1$ if and only if $i$ and $j$ are connected. 

We applied SCA, SPC, and GPower\footnote{Since SPCA and SPCAvRP performs worse than SPC and GPower \citep{zou2018selective}, we excluded the two methods in this simulation for simplicity.} to each of the 30 simulated adjacency matrices with $k=4$. 
We varied the sparsity parameter $\gamma$ to take value in $\{18, 24, 36, 48, 60, 66\}$.
%For SCA, we set the sparsity parameter to $\gamma$. 
For SPC, we required each of the four PCs to have $\ell_1$ norm $\gamma/4$. 
As for GPower, we tuned its parameters such that the returned loading matrix has the $\ell_1$ norm of $\gamma$. 
Figure \ref{fig:4_heatmap} depicts the estimated loadings returned by SCA and SPC. On the left two columns of panels ($\gamma=48$ and $36$), the supports of the four sparse PCs were well separated and indicated block memberships. This suggested that we could use the loadings to cluster nodes and quantitatively assessed the quality of sparse PCA methods. 
Specifically, we assigned node $i$ to cluster $j$ if $Y_{ij}$ is the largest absolute value in the $i$th row of $Y$, that is, $|Y_{ij}|>|Y_{il}|$ for all $l \neq j$. In the case of ties or all-zero rows, the cluster label is randomly assigned. For each estimate, let $C\in\{1,2,3,4\}^n$ contain the assigned cluster labels and $C^*\in\{1,2,3,4\}^n$ contain the true labels. Define the accuracy as 
$$\text{Accuracy}(C, C^*)=\max_{\pi\in\mathcal{P}(4)}\left\{\frac{1}{n}\sum_{i=1}^n\1\left(\pi\left(C_i\right)= C^*_i\right)\right\},$$
where $\mathcal{P}(4)$ contains all the possible permutation functions of the set $\{1,2,3,4\}$, and $\1(x)$ is the indicator function of $x$.
%The accuracy quantifies the chances of correctly clustering and evaluates the efficacy of sparse PCA in community detection. 
We used the accuracy to assess the quality of the sparse PCA solutions. 
Figure \ref{fig:4_mcr} depicts the accuracy of the three methods with varying sparsity parameters. 
It can be seen that the performance of GPower and SCA were less affected by the changing of sparsity parameter, while SPC was profoundly influenced.
As $\gamma$ became smaller, SPC quickly lost its power in community detection, suggesting that SPC is more sensitive to the choices of tuning parameter. 
Although less sensitive to the change in $\gamma$, GPower produced poorer estimation of sparse PCs, with the accuracy slightly better than random guesses ($\text{accuracy}=0.25$). 
Overall, SCA yielded higher accuracy with smaller deviation compared to the others, suggesting that SCA is less dependent on the choice of sparsity parameters.

\begin{figure}
	\centering
	\includegraphics[width=0.66\linewidth]{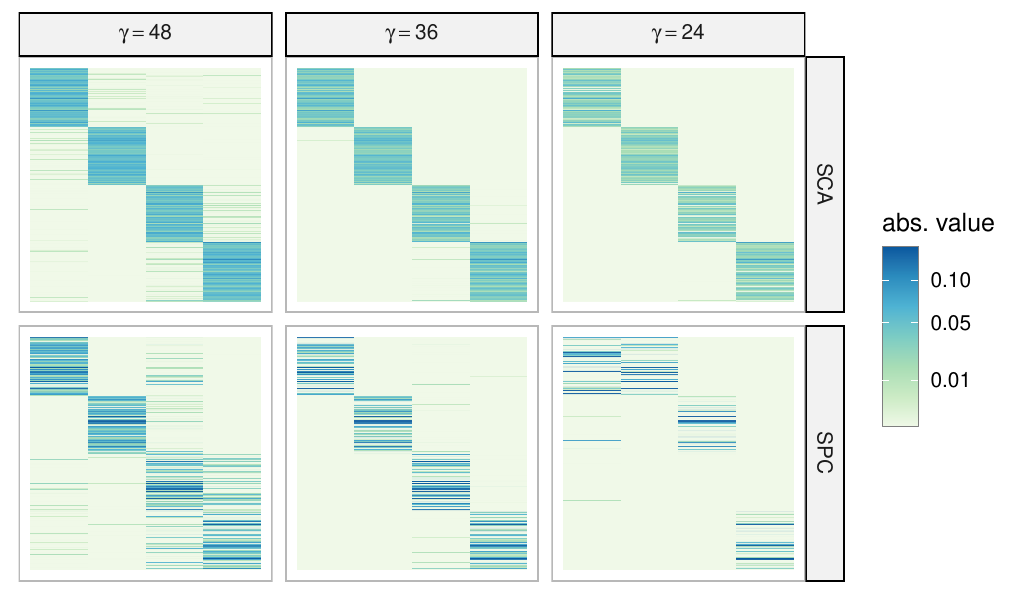}
	\caption{Comparisons of SCA and SPC using simulated network data. Heat maps of the loadings ($900\times4$ matrices) returned by SCA and SPC using three different sparsity parameters ($\gamma=24,36,48$). In each heat map, rows correspond to nodes, which are grouped by the true community membership, and each column corresponds to one sparse PC. The color shade indicates the absolute of loadings.}
	\label{fig:4_heatmap}
\end{figure}

\begin{figure}
	\centering
	\includegraphics[width=.5\linewidth]{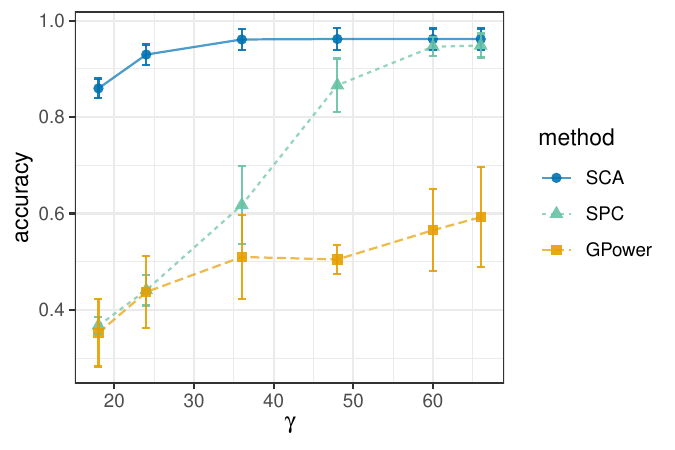}
	\caption{
		Comparisons of sparse PCA methods using simulated network data. The accuracy of SCA, GPower, and SPC in community detection using various sparsity parameters ($\gamma$). Each point indicates the mean accuracy across 30 replicates, and the error bar indicates the standard deviation of the evaluated accuracy. 
	}
	\label{fig:4_mcr}
\end{figure}

\begin{table}
	\centering
	\begin{tabular}{|c|c|c|c|c|c|c|}
		\hline
		& $\gamma=18$ & $\gamma=24$ & $\gamma=36$ & $\gamma=48$ & $\gamma=60$ & $\gamma=66$ \\ 
		\hline
		Using SCA solution & 191.47 & 323.36 & \textbf{1135.03} & \textbf{1906.25} & \textbf{2554.86} & \textbf{2783.73} \\
		Using SPC solution & \textbf{544.81} & \textbf{705.01} & 1029.04 & 1195.91 & 1334.67 & 1423.95 \\
		\hline
	\end{tabular}
	\caption{Comparison of the SPC objective values, $\sum_{i=1}^4(u_i\T A v_i)^2$ (see Equation \eqref{eqn:spc}), evaluated using the output of the SCA and SPC algorithms with various sparsity parameter ($\gamma$).}
	%		The SCA algorithm outputs $Z,Y\in\R^{900\times4}$ (Equation \eqref{eqn:sca2}). Here, the columns of $Y$ correspond to $v_i$s (for $i=1,2,3,4$) in Equation \eqref{eqn:spc}, and similarly, the columns of $Z$ correspond to $u_i$s.
	\label{tab:obj}
\end{table}

In this example, SCA outperforms SPC because it finds a better optimization solution.  This comparison could be made difficult by the fact that they have different objective functions.  However, in this case, even though SCA is optimizing a different objective function, it outperforms SPC at \textit{optimizing the SPC objective function}. 
%Comparing the solutions of SCA and SPC, we observed that the algorithm of SPC is likely to return a local optimum. For example, 
Table \ref{tab:obj} lists the objective values of SPC (Equations \eqref{eqn:spc}) evaluated using the solutions of the SCA and SPC algorithms with various $\gamma$. When $\gamma\in\{36, 48, 60, 66\}$, the SCA algorithm outputs a solution that achieves a higher value of the SPC objective, suggesting that the SPC algorithm is likely to return local optima.

\section{Applications} \label{sxn:app}
In this section, we applied SCA to real data. 
The first application is the sparse coding of natural images. It illustrates the utility of sparse PCA as independent component analysis. Supplementary Section S3.1 
%\ref{sxn:bss} 
contains another application of SCA to blind source separation of images. 
Next, we demonstrate the ability of SCA in handling high-dimensional problems (i.e., $p > n$) through a transcriptome sequencing dataset and a targeted sample of Twitter friendship network. These datasets are of large scale. To our knowledge, no other current implementations of sparse PCA can efficiently handle a large matrix at the scale. As such, we will restrict our discussion to SCA. 

\subsection{Sparse coding of images} \label{sxn:coding}

\begin{figure}	
	\centering
	\begin{multicols}{2}
		\textbf{PCA} \\
		\includegraphics[width=.75\linewidth]{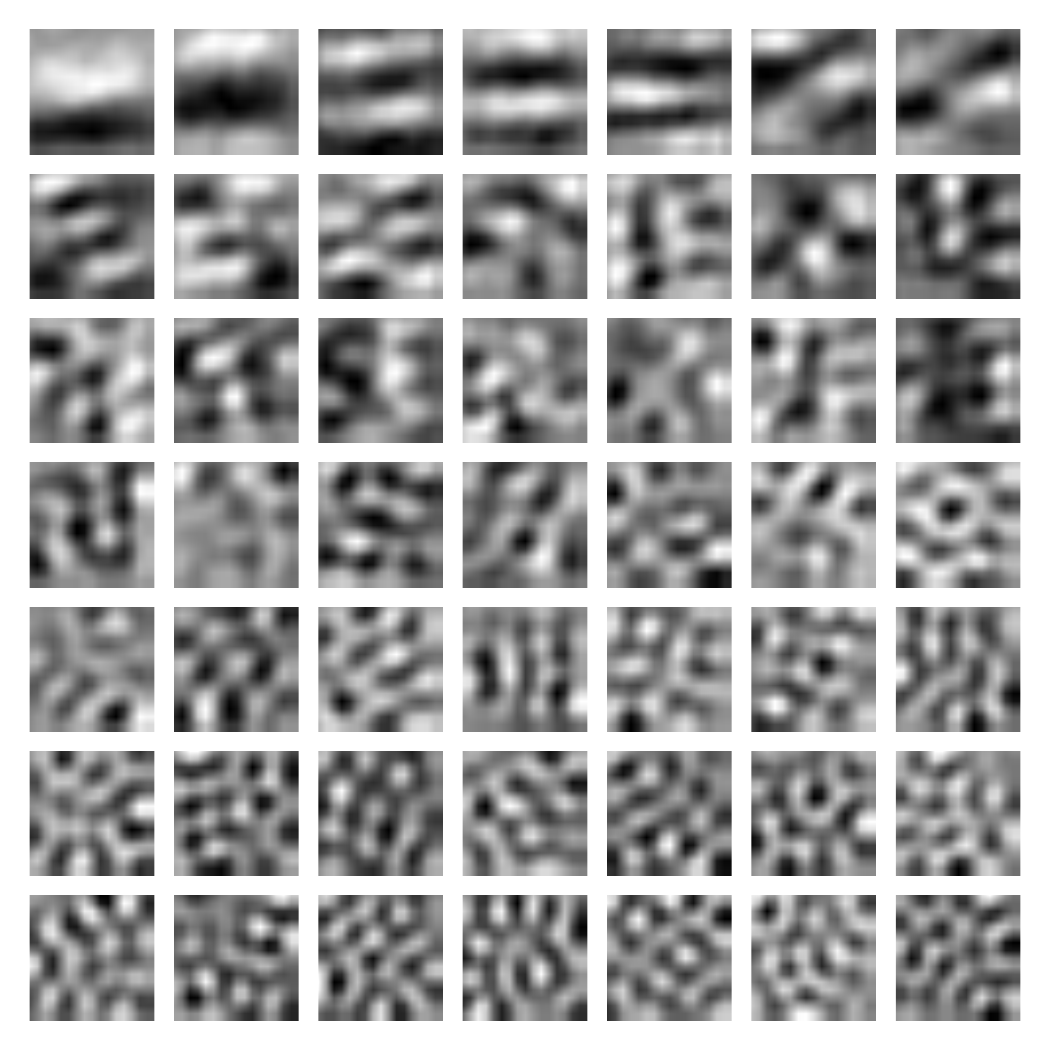} \\
		\textbf{SCA} \\
		\includegraphics[width=.75\linewidth]{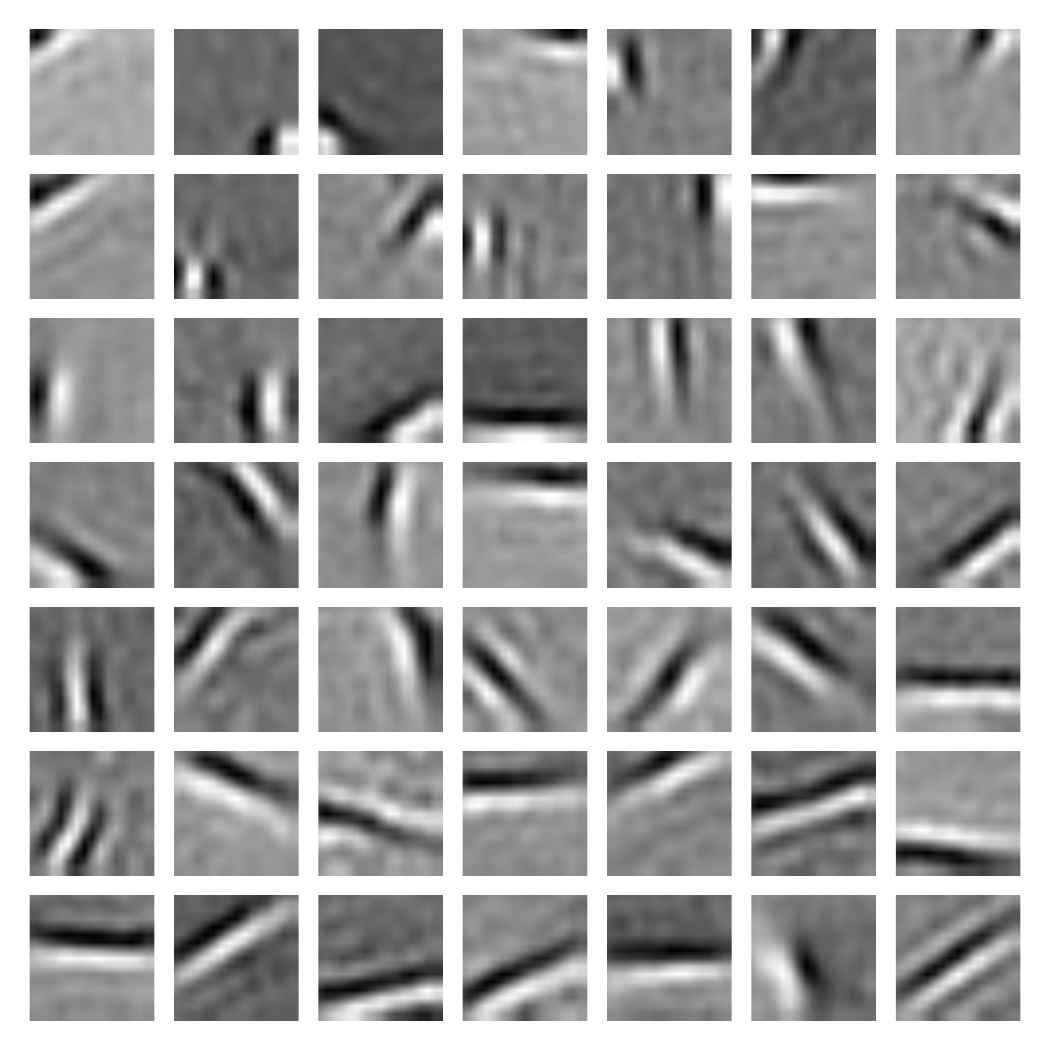}
	\end{multicols} 
	\vspace{-1.2em} %	\vskip -1em
	\caption{Sparse image encoding using PCA (left) and SCA (right). For both method, shown are the $49$ image bases (i.e., component scores) extracted from natural images. Each image basis is in $16\times16$ pixel.}
	\label{fig:5_encoding}
\end{figure}

Low-level visual layers, such as retina, the lateral geniculate nucleus, and the primary visual cortex (V1) are shared processing components in mammalian. 
The receptive fields in the V1 can be characterized as being spatially localized, oriented and bandpass (i.e., selective to structure at different spatial scales).
To understand V1, one line of research focuses on finding sparse and linearly independent codes for natural images, which provides an efficient representation for later stages of processing \citep{field1994goal, olshausen1996emergence, bell1997independent}. 
This type of research is based on the hypothesis of sparse coding, that is, any perceived scenes can be synthesized via the linear combination of some small subsets of basis images \citep{lee2006efficient, gregor2010learning}).
In this application, we show that sparse PCA produces a set of bases for natural images that resembles those found in \citet{olshausen1996emergence}. %and identifying the (sparse) linear coefficients that encode the synthesis. 

We utilized ten natural images from \citet{olshausen1996emergence}, each of which contains $512 \times 512$ pixels. 
We followed the same whitening process as described by the authors. 
Next, we randomly sampled a total of $12,000$ small image patches the ten images, where each patch contains $16\times16$ pixels. 
This was followed by a centering step that subtracts each pixel by the mean of all $256$ pixels. 
We vectorized each patch of image and put them into the rows of a data matrix, $X\in\R^{n \times p}$, where $n=12,000$ and $p=256$.
Finally, we applied SCA to the transposed data matrix, $X\T$ (note that this is sparse coding). For this exploratory analysis, we set $k=49$ to find 49 sparse PCs (the same result holds for various selections of $k$) with the default sparsity parameter, $\gamma=\sqrt{pk}$.
In particular, for the varimax rotation, we normalized the rows to unit length rescaled them afterward, as recommended by \citet{kaiser1958varimax}. 
In the output of SCA, the estimated scores $S\in\R^{p \times k}$ contains the basis images, and the estimated sparse loadings $Y\in\R^{n\times k}$ encodes how the basis images are linearly combined to form each image patch (i.e., $Y$ contains the linear coefficients). 

Figure \ref{fig:5_encoding} displays the 49 image bases returned by PCA and SCA, where each image represents one column of $S$ (transformed into a $16\times16$ array). 
For SCA, all of the basis images appeared to exhibit simple patterns, such as lines and edges.
As for PCA, the oriented structure in the first few basis images does not arise as a result of the oriented structures in natural images, yet more likely because of the existence of those components with low spatial frequency \citep{field1987relations}.

\subsection{Analysis of single-cell gene expression data}\label{sxn:scrnaseq}

Single-cell transcriptome sequencing (scRNA-seq) provides high-throughput transcriptome expression quantification at individual cell level. It has been widely used across biological disciplines. For example, patterns of gene expression can be identified through clustering analysis. This helps uncover the existence of rare cell types within a cell population that have never been seen \citep{plasschaert2018single,montoro2018revised}. In this application, we aimed to use SCA to extract the sparse PCs of genes that characterize some known cell types.

\begin{figure}
	\centering
	\includegraphics[width=0.55\linewidth]{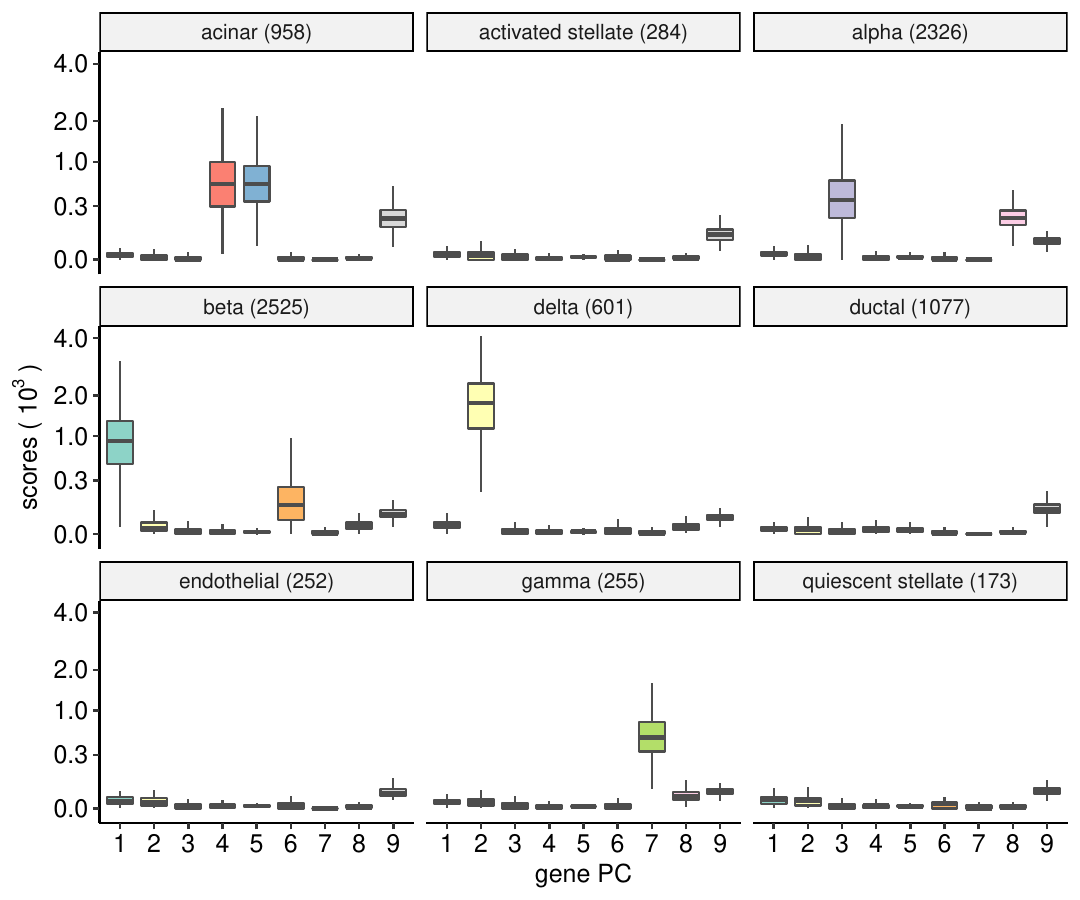}
	\caption{Scores of sparse gene principal components (PCs) stratified by cell types. Each panel displays one of nine cell types with the names of cell types and the number of cells reported on the top strips. For each cell type, a box depicts the component scores for nine sparse gene PCs.}
	\label{fig:6_scRNAseq}
\end{figure}

\begin{table}[!b]
	\centering
	\begin{tabular}{|c|c|l|}
		\hline
		\textbf{PC} & \textbf{\# of genes} & \textbf{Gene name(s)} \\ 
		\hline
		1 &   1 & INS  \\ 
		2 &   1 & SST  \\ 
		3 &   1 & GCG  \\ 
		4 &   8 & CTRB2, REG1A, REG1B, REG3A, SPINK1 ... \\ 
		5 &  15 & CELA3A, CPA1, CTRB1, PRSS1, PRSS2 ... \\ 
		6 &   1 & IAPP  \\ 
		7 &   1 & PPY  \\ 
		8 &   3 & CLU, GNAS, TTR  \\ 
		9 &  61 & ACTG1, EEF1A1, FTH1, FTL, TMSB4X ... \\  
		\hline
	\end{tabular}
	\caption{Sparse gene PCs estimated by SCA. For each gene PC, the number of genes (i.e., the number of non-zeros in the loadings) and the top 5 genes according to the absolute loadings are reported.}
	\label{tab:genePC}
\end{table}

For this application, we used the human pancreatic islet cell data from \citet{baron2016single}. We removed the genes that do not exhibit variation across all cells (i.e., zero standard deviation) and removed the cell types that contain fewer than 100 cells. This resulted in a data matrix $X\in\R^{n \times p}$ of $n=8,451$ cells across nine cell types and $p=17,499$ genes, with $X_{ij}$ measuring the expression level of gene $j$ in cell $i$. $X$ is sparse; it contains 10.8\% non-zero elements. 
We applied SCA on $X$ to find $k=9$ sparse gene PCs. We set the sparsity parameter to $\gamma=\log(pk)\approx12$, as we aimed for particularly sparse PCs (i.e., each PC is consist of a small number of genes). The algorithm took about 5 minutes (24 iterations) to complete on a single processor (3.3GHz). 
As a result, each column of the loading matrix contains a small number of non-zero elements, suggesting that most of the gene PCs consist of one or a few genes. Table \ref{tab:genePC} lists the names of these genes for each PCs. For example, the PC 2 consists of only one gene, SST.
Despite the simple structure of PCs, these PCs picked up informative gene markers for individual cell types. To see this, we calculated the scores for each cell using the 9 PCs (That is, each cell gets 9 scores, each of which corresponds to one of the nine PCs). Figure \ref{fig:6_scRNAseq} displays the box plots of the scores stratified by cell type.
For example, the expression of the SST gene (which solely composes the 2nd PC) identifies the ``delta'' cells. This result highlights the power of scRNA-seq in capturing cell-type specific information and suggests the applicability of our methods to high-dimensional biological data.

\subsection{Clustering of Twitter friendship network} \label{sxn:twitter}

This application serves in a grand efforts of ours to study political communication on social media, like Twitter. The information on Twitter is organized so that users primarily read the tweets of their ``friends.'' In order to select content, a user can freely ``follow'' (and ``unfollow'') any other accounts, and we call these other accounts the friends of it. Thanks to this design, the communication on Twitter can be contextualized by the friendship network. As such, we hypothesize that user's community membership in the network offers the context of user's opinion expression on social media \citep{zhang2022social,zhang2022network,zhang2022peripheral}. To study the hypothesis, a key step is to cluster Twitter accounts using their friendship network. In this section, we demonstrate large-scale network clustering using sparse PCA. 

\begin{figure}[!h]
	\centering
	\includegraphics[width=.72\linewidth]{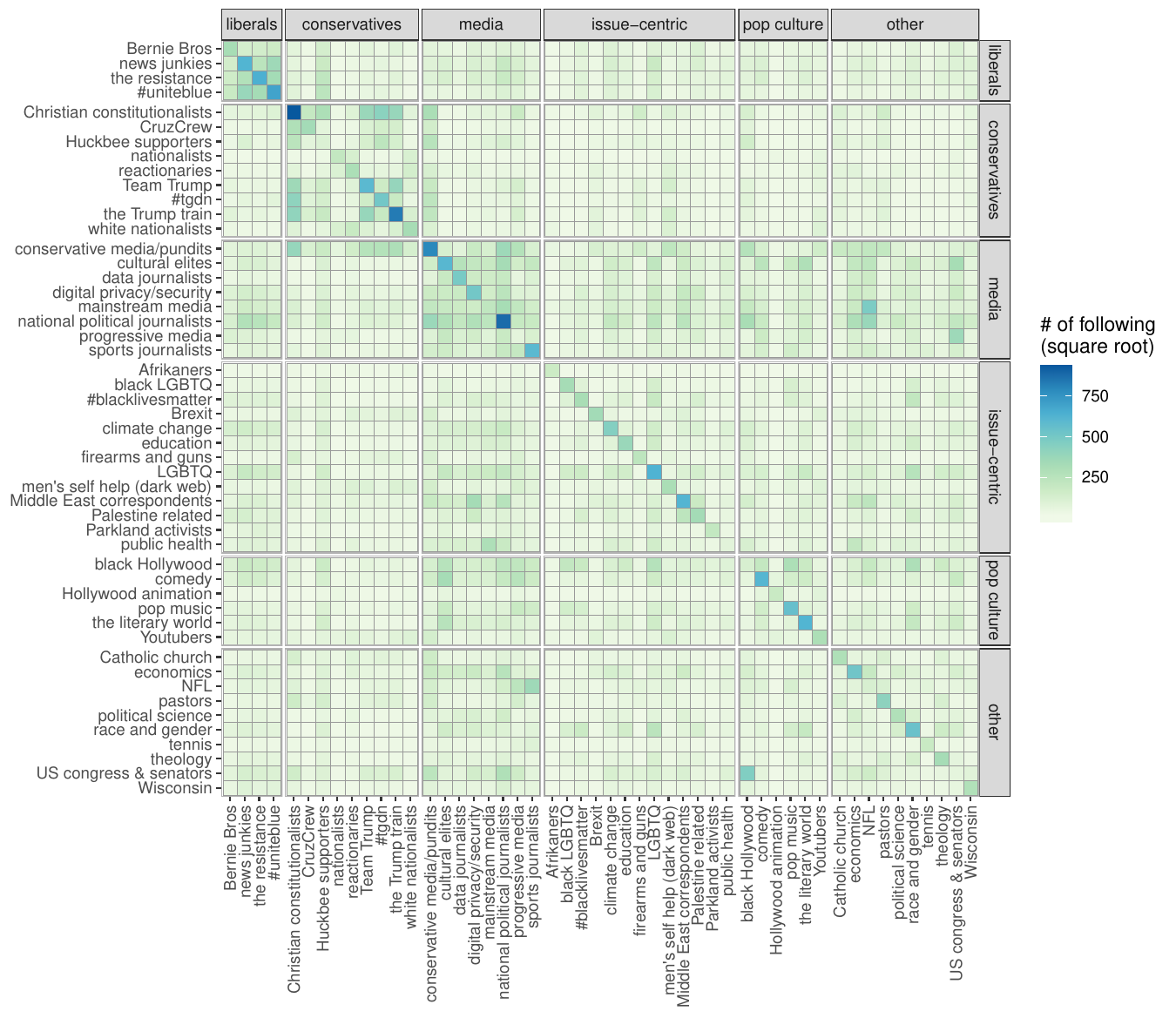}
	\caption{Heat map of friend counts between row and column clusters of Twitter accounts. Each row and column corresponds to a cluster. The row and column panels indicate cluster category, with the category names shown in the top and right strips. The color shades indicate the number of followings from the row cluster to the column cluster, after the square root transformation.}
	\label{fig:7_twitter_followings}
\end{figure}

For this application, we collected a targeted sample from the Twitter friendship network in August 2018 \citep{chen2020targeted}. In this sample, there are $n=193,120$ Twitter accounts who follow a total of $p=1,310,051$ accounts, after filtering out the accounts with few followers or followings. We defined the graph adjacency matrix $A\in\{0, 1\}^{n \times p}$ with $A_{ij} = 1$ if and only if account $i$ follows account $j$.\footnote{The columns of $A$ are not centered nor scaled. One alternative is to use the normalized version of $A$. For example, define the regularized graph Laplacian as $L\in\R^{n \times p}$ with $L_{ij}=A_{ij}/\sqrt{(r_i + \bar r) (c_j + \bar c)}$, where $r_i=\sum_jA_{ij}$ is the sum of the $i$th row of $A$, $c_j=\sum_iA_{ij}$ is the sum the $j$th column of $A$. Here, $\bar r$ and $\bar c$ are the means of $r_i$'s and $c_j$'s respectively. \citep{zhang2018understanding}.} 
This resulted in a sparse $A$ with about 0.02\% entries being 1. %50,898,396
We applied SMA to $A$ with $k=100$ and default sparsity parameters. 
This analysis was computationally tractable; one iteration of the SMA algorithm took about 54 minutes on a single processor (2.5GHz), thanks to the efficient algorithm that computes the sparse SVD \citep{baglama2005augmented}. 
Figure \ref{fig:2_rotation} displays seven example columns of $Y$. 
Using the output $Z\in\R^{n\times k}$ and $Y\in\R^{p\times k}$ from SMA, the clusters of Twitter accounts were determined as follows (same as in Section \ref{sxn:simusbm}): the $i$th row account of $A$ was assigned to the $l$th row cluster if $Z_{il}$ was the greatest in the $i$th row of $Z$, that is, $|Z_{il}|\ge |Z_{il'}|$ for all $l'=1,2,...,k$, and the $j$th column account of $A$ was assigned to the $l$th column cluster if $Y_{jl}$ was the greatest in the $j$th row of $Y$, $|Y_{jl}|\ge |Y_{jl'}|$ for all $l'=1,2,...,k$. 
Upon detailed evaluation of these clusters, we showed that our clustering of Twitter accounts formed homogeneous, connected, and stable social groups \citep{zhang2022social}. 
For example, we found that a user is more likely to retweet the content that originated from another member in the same clusters (p-value $<10^{-16}$ in a $\chi^2$ test).
More interestingly, the estimated row clusters and column clusters are matched \citep{rohe2016co}, that is, the $k$th row cluster tends to follow the accounts in the $k$th column cluster. To illustrate this, we quantified the number of followings from the row clusters to the corresponding column clusters. Figure \ref{fig:7_twitter_followings} displays the results for 50 selected clusters that are related to U.S. politics. It can be seen that the number of followings between each paired row and column clusters (i.e., the diagonals in Figure \ref{fig:7_twitter_followings}) showed marked enrichment. 
These results suggest the efficacy of our methods for analysis of social network data.

\section{Discussions} \label{sxn:disc}

In this paper, we introduced SCA, a new method for sparse PCA, and SMA, an extension for two-way matrix analysis. 
SCA differs from the existing sparse PCA methods in that it estimates column sparse PCs, that is PCs that are sparse in an orthogonally rotated basis. 
This is particularly useful when the singular vectors of a data matrix (or the eigenvectors of the covariance matrix) are not readily sparse.
We demonstrated that it explains more variance in the data than the state-of-the-art methods of sparse PCA. 
In addition, the algorithm is also stable and robust against a wide choices of tuning parameters.
In practice, SCA is advantageous when multiple PCs are desired because it does not require the deflation.

%% before appendix (optional) and bibliography:
\section*{Acknowledgments}
%This research is partially supported by the National Science Foundation under Grant DMS-1612456 and DMS-1916378 and the Army Research Office under Grant W911NF-15-1-0423. 
We thank S\"{u}nd\"{u}z Kele\c{s}, S\'{e}bastien Roch, Po-Ling Loh, Michael A Newton, Yini Zhang, Muzhe Zeng, Alex Hayes, E Auden Krauska, Jocelyn Ostrowski, Daniel Conn, and Shan Lu for all the helpful discussions.

\newpage
\appendix
\begin{center}
{\LARGE\bf Appendices}
\end{center}

\section{Technical proofs} \label{sxn:proof}

%Before Proposition \ref{thm:improve}, we first prove Lemma \ref{lem:translate1}.

\begin{proof}\textbf{of Proposition \ref{thm:improve}}
	Since $D$ is diagonal, and $B$ can be any matrix including diagonal, the inequality result holds. 
	Furthermore, given any fixed $Z$ and $Y$ subject to the constraints in \eqref{eqn:sca1} (i.e., $Y$'s columns are not the leading eigenvectors of $X$), the maximizer on the left-hand-side is $B^*= Z\T X Y$ which is not diagonal.\footnote{Generally, $B^*=\left(Z\T Z\right)^{-1} Z\T X Y \left(Y\T Y\right)^{-1}$ if $Z$ and $Y$ are full-rank, or $B^*=\left(Z\T Z\right)^{+} Z\T X Y \left(Y\T Y\right)^{+}$ if either $Z$ or $Y$ is singular, where $A^+$ is the Moore--Penrose inverse of matrix $A$.} Hence, the inequality is strict. 
\end{proof}

\begin{proof}\textbf{of Lemma \ref{lem:translate1}}
	We rewrite the objective function:
	\begin{eqnarray*}
		\norm{X-ZBY\T}^2\F  &=& \tr\left[\left(X-ZBY\T\right)\T\left(X-ZBY\T\right)\right]\\
		&=&\norm{X}^2\F -2\tr\left(X\T ZBY\T\right)+\tr\left(B\T B\right)\\
		&=&\norm{X}^2\F -\tr\left[B\T\left(2Z\T XY-B\right)\right].
	\end{eqnarray*}
	For fixed $Z$ and $Y$, take the derivative of $B$ and set it to zero. We have the optimizer
	$B^*=Z\T XY$
	and the squared optimal value is $\norm{X}^2\F -\norm{Z\T XY}^2\F $. Recognizing that $\norm{X}^2\F$ is determined, the desired formulation \eqref{eqn:sma2} follows.
\end{proof}

\begin{remark} [Minimal matrix reconstruction error of PMD] \label{rem:pmd}
	If $B$ is constrained to a diagonal matrix in \eqref{eqn:sma1}, then the squared minimal value is 
	$\norm{X}^2\F -\sum_{i=1}^{k}d_i^2$,
	where $d_i=\left[Z\T XY\right]_{ii}$ for $i=1,2,...,k$.
\end{remark}
\begin{proof}
	From the proof of Lemma \ref{lem:translate1}, we have
	\begin{eqnarray*}
		\norm{ X-ZDY\T }^2\F 
		&=&\norm{X}^2\F -\tr\left[D\T\left(2Z\T XY-D\right)\right].
	\end{eqnarray*}
	Then, take the derivative of $D$ and set it to zero. This yields the solution $\hat{D}=\text{diag}(d_i)$, where $d_i=\left[U\T XV\right]_{ii}$.
	Finally, plugging-in the maximizer $\hat{D}$ gives the claimed optimal value. Note that $\sum_{i=1}^{k}d_i^2\le\norm{U\T XV}^2\F$.
\end{proof}

\begin{proof}\textbf{of Lemma \ref{lem:unconstrained}}
	Suppose the low-rank SVD of $C\in\R^{p \times k}$ is $UDV\T$, where $U\in\V(p,k)$, $V\in\U(k)$, and $D\in\R^{k \times k}$ is diagonal. Then, 
	$$\norm{C\T X}\F^2=\tr\left(X\T CC\T X\right)=\tr\left(X\T UD^2U\T X \right).$$
	The trace quadratic form is maximized at $X^*=UR$, for any orthogonal matrix $R\in\U(k)$.
	In particular, when $R=V$, $X^*=\polar(C)$.
\end{proof}

\section{Choosing the sparsity parameter} \label{sxn:tuning}
The sparsity controlling parameters in SCA and SMA---$\gamma$, $\gamma_y$, and $\gamma_z$---are meaningful if they take values from a certain range, depending on the choice of $\ell_p$-norm constraint. 
In this section, we discuss the sparsity constraint on $Y$; the constraint on $Z$ is similar. 
First, consider the $\ell_{1}$-norm constraint $\norm{Y}_1\le\gamma$. 
The sparsity parameter should satisfy $k \le\gamma\le k\sqrt{p}$. 
This is for the set $\{Y\in\R^{p \times k}\mid\norm{Y}_1=\gamma\}$ to intersect with $\V(p,k)$. 
On the right hand side, if $\gamma>k\sqrt{p}$, any element in $\V(p,k)$ satisfies $\norm{Y}_1<\gamma$, so the sparsity constraint is ineffective (Figure \ref{fig:3_penalty}: left panel). 
On the left hand side, if $\gamma<k$, none of the elements in $\V(p,k)$ satisfies $\norm{Y}_1\le\gamma$, so the solution to \eqref{eqn:sca3} does not fall on $\V(p,k)$.
Similarly, for the $\ell_{4/3}$-norm sparsity constraint $\norm{Y}_{4/3}\le\gamma$, the sparsity controlling parameter should take value within $k^{3/4}\le\gamma\le p^{1/4}k^{3/4}$ (Figure \ref{fig:3_penalty}: right panel).

\begin{figure}
	\centering
	\includegraphics[width=.6\linewidth]{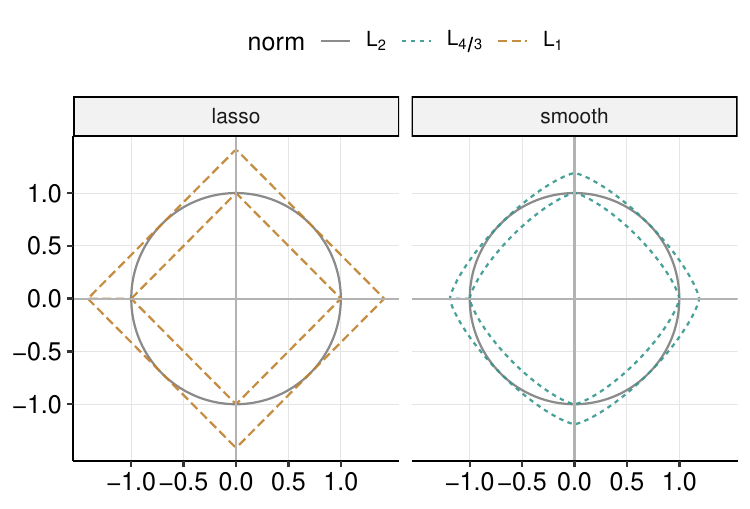}
	\caption{Comparison of the $\ell_p$ norms. Left (lasso): Two $\ell_{1}$-norm contours (brown) of 1 and $\sqrt{2}$ and the $\ell_2$-norm contour (grey) of 1. Right (smooth): $\ell_{4/3}$-norm contours (green) of 1 and $2^{1/4}$ and the $\ell_2$-norm contour (grey) of 1. }
	\label{fig:3_penalty}
\end{figure}

In Algorithm \ref{alg:sca}, the sparsity parameter is optional. 
If absent, the algorithm uses a default value of $\gamma=\sqrt{pk}$ (or $\gamma_z=\sqrt{nk}$ and $\gamma_y=\sqrt{pk}$ in SMA). 
This is supported by our simulation results showing that the SCA algorithm is robust against various choices of $\gamma$ (Section \ref{sxn:simusbm} of the paper). 
In addition, we observed that the default settings generally yielded meaningful estimates in real data applications. 
%In exploratory data analysis, one may simply fix the sparsity parameter. 

In practice, users are encouraged to experiment various values of $\gamma$ (e.g., using a grid search) to select the best fit that meets their expectations of sparsity.

The sparsity parameter can also be tuned based on the data. 
We provide a schema for cross-validate the parameters of SCA and SMA (e.g., the approximation rank $k$ and the sparsity parameter $\gamma$).
To assess a candidate parameter, we adapt a $K$-fold cross-validation framework ($K$ often takes the value 10) as previously introduced by \citet{wold1978cross}: 
\begin{enumerate} [(a)]
	\item %XXX give dimensions of the data XXX  
	Given the input data $X\in\R^{n\times p}$, we first construct $K$ leave-out data matrices $X^{(1)}$, $X^{(2)}$, $...$, $X^{(K)}\in\R^{n\times p}$, each of which has one-$K$th disjoint portion of elements being randomly sampled and removed (i.e., set to zero). 
	Let $C^{(k)}$ collects the indices of those left-out elements in $X^{(k)}$, for $k=1,2,...,K$. 
	\item Next, we apply SCA (or the SMA) to every new matrix $X^{(k)}$ with the candidate tuning parameters and obtain its low-rank approximation $\hat{X}^{(k)}$. That is, for SCA, $\hat{X}^{(k)}=X^{(K)}\hat{Y}^{(k)}[\hat{Y}^{(k)}]\T$, and for SMA, $\hat{X}^{(k)}=\hat{Z}^{(k)}\hat{B}^{(k)}[\hat{Y}^{(k)}]\T$
	\item Finally, calculate the mean square error (MSE) of $\hat{X}^{(k)}$ over those left-out elements $C^{(k)}$, defined as
	$$\text{MSE}(k)=\sum_{(i,j)\in C^{(k)}}\left(\hat{X}^{(k)}_{ij}-X_{ij}\right)^2, k=1,2,...,K.$$
	We then evaluate the ``goodness'' of a candidate parameter by the average MSE across $K$ leave-out data matrices.
\end{enumerate} 

%A few remarks on the parameter assessment procedure are in order. 
Upon the construction of leave-out data matrices, the left-out elements are randomly sampled; this typically removes scattered entries of $X$, rather than trunks of adjacent ones. For example, if $X$ is the adjacency matrix of a graph, then this procedure is akin to the edge cross-validation studied by \citet{li2020network}.
Setting the left-out elements to zero eliminates all terms in $\norm{Z\T X Y}\F$ that related to them. Our low-rank estimation for the missing entries is closely related to the SVD-based methods in data imputation literature \citep{troyanskaya2001missing}.

\section{Properties of soft-thresholding} \label{sxn:soft}

In the PRS update, the last step uses a shrinkage operator to project the rotated matrices onto the feasible set. 
%In this section, we show that a class of shrinkage operators is akin to the gradient projection \citep{rosen1961gradient, Gorski2007Biconvex} with respect to specific penalty functions. 
Shrinkage operators are widely used for creating sparse structure, as it is easy to implement. The threshold value $t$ can be found in $\order(\log_2(1/\varepsilon))$ time through a binary search, where $\varepsilon$ is the convergence tolerance.

For the $\ell_1$-norm constraint (or penalty), we show that a soft-thresholding shrinkage is ``appropriate.'' 
Let $Y\in\V(p,k)$ and $\hat{Y}\in\B(p,k)$ be the two matrices before and after a shrinkage operation respectively.
A direct calculation shows that given a constraint ${\|\hat{Y}\|}_1\le\gamma$, the soft-thresholding shrinkage, $\hat{Y}=T_\gamma(Y)$, minimizes ${\|\hat{Y}-Y\|}\F$. 
After the shrinkage, the objective value in \eqref{eqn:sca3} (i.e., explained variance) decreases by at most ${\|Z\T X\|}\F{\|\hat{Y}-Y\|}\F$.
Note that we update $Y$ fixing $Z$ (and $X$).

We provide theoretical properties for the soft-thresholding, regarding preservation of orthogonality and the explained variance. 
Let $Y\in\V(p,k)$ and let $\hat{Y}=T_\gamma(Y)$ be the result of soft-thresholding $Y$ as defined in \eqref{eqn:soft}. 

First, we denote the included angles between any two columns of $\hat{Y}$ and $Y$ as $\theta_{ij}$, for $i,j=1,2,...,k$. 
When it is clear, we also write $\theta_{ii}$ as $\theta_{i}$ for simplicity.
We define the \textit{deviation} between $\hat{Y}$ and $Y$ as $\sum_{i=1}^k \sin^2(\theta_{i})$.
The following proposition bounds the sum of deviations.% and is due to Proposition 8 in \citet{hu2016sparse}.
\begin{proposition} [Deviation due to soft-thresholding] \label{thm:soft1}
	If $t$ is sufficiently small, then
	\begin{eqnarray*}
		\sum_{j=1}^{k}\sin^2(\theta_j)\le\norm{\hat{Y}-Y}\F^2.
	\end{eqnarray*}
\end{proposition}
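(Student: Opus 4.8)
The plan is to reduce the matrix inequality to a per-column statement and then to a trivial ``nonnegative square.'' Write $y_j$ for the $j$th column of $Y$ and $\hat y_j$ for the $j$th column of $\hat Y = T_\gamma(Y)$. Since soft-thresholding acts entrywise, $\hat y_j$ is the image of $y_j$ under the scalar soft-threshold with the common threshold $t>0$, i.e.\ $(\hat y_j)_i = \sign((y_j)_i)(|(y_j)_i| - t)_+$. Because $\norm{\hat Y - Y}\F^2 = \sum_{j=1}^k \norm{\hat y_j - y_j}_2^2$, it suffices to prove $\sin^2(\theta_j) \le \norm{\hat y_j - y_j}_2^2$ for each $j$ individually, where $\cos(\theta_j) = \langle \hat y_j, y_j\rangle / (\norm{\hat y_j}_2 \norm{y_j}_2)$ and $\norm{y_j}_2 = 1$ since $Y \in \V(p,k)$.

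First I would pin down the role of the hypothesis that $t$ is sufficiently small: it is used only to guarantee $\hat y_j \neq 0$, so that the angle $\theta_j$ is well defined and $\norm{\hat y_j}_2 > 0$. Indeed $\sum_i (y_j)_i^2 = 1$ forces $\max_i |(y_j)_i| \ge 1/\sqrt p$, so any $t < 1/\sqrt p$ leaves at least one entry of every column strictly active. Then fix a column, drop the subscript, and let $A = \{i : |y_i| > t\}$ be its active set, with $m = |A|$, $s = \sum_{i \in A} y_i^2$, and $\ell = \sum_{i \in A} |y_i|$. Using $\hat y_i = y_i - t\,\sign(y_i)$ for $i \in A$ and $\hat y_i = 0$ otherwise, a short computation gives $\norm{\hat y - y}_2^2 = mt^2 + (1-s)$, $\langle \hat y, y\rangle = s - t\ell$, and $\norm{\hat y}_2^2 = s - 2t\ell + mt^2$; note $\langle \hat y, y\rangle = \sum_{i\in A}|y_i|(|y_i|-t) \ge 0$, so $\theta \in [0,\pi/2]$ and the relevant branch of $\sin^2\theta = 1 - \cos^2\theta$ is the intended one.

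Substituting these into $\sin^2\theta = 1 - \langle \hat y, y\rangle^2/\norm{\hat y}_2^2$, the target inequality $\sin^2\theta \le \norm{\hat y - y}_2^2$ becomes $s - mt^2 \le (s - t\ell)^2 / (s - 2t\ell + mt^2)$. Multiplying through by the positive denominator $\norm{\hat y}_2^2$ and expanding, all terms cancel except a residual, leaving $0 \le (t\ell - mt^2)^2$, which is obvious. Summing over $j$ yields the proposition; in fact the same algebra gives the sharper identity $\norm{\hat Y - Y}\F^2 - \sum_j \sin^2(\theta_j) = \sum_j (t\ell_j - m_j t^2)^2 / \norm{\hat y_j}_2^2 \ge 0$, where $m_j$ and $\ell_j$ are the active-set size and $\ell_1$-mass of column $j$.

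I do not expect a real obstacle; the content is the bookkeeping in the middle paragraph. The only subtleties are confirming $\langle \hat y, y\rangle \ge 0$ so the cosine lies in $[0,1]$, and justifying the multiplication by the denominator, which is legitimate precisely because the small-$t$ hypothesis ensures $\norm{\hat y_j}_2^2 > 0$.
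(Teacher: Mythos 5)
Your proof is correct, and it is a genuinely different argument from the paper's. The paper works column by column through the chain $\cos(\theta_j)\ge\norm{\hat y_j}_2$ (since $\hat y_j\T(y_j-\hat y_j)\ge 0$ for soft-thresholding), hence $\sin^2(\theta_j)\le 1-\norm{\hat y_j}_2^2$, and then asserts $1-\norm{\hat y_j}_2^2\le\norm{\hat y_j-y_j}_2^2$ ``by the triangular inequality.'' You instead compute all three quantities exactly on the active set $A_j=\{i:|Y_{ij}|>t\}$ in terms of $m_j$, $s_j$, $\ell_j$, and reduce the claim to the perfect square $(t\ell_j-m_jt^2)^2\ge 0$; I checked the bookkeeping ($(s-m t^2)(s-2t\ell+m t^2)\le(s-t\ell)^2$ does collapse to exactly that square), and your exact expression for the slack, $(t\ell_j-m_jt^2)^2/\norm{\hat y_j}_2^2$, is also right. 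Your route buys more than explicitness: the paper's final step is the weak link, because entrywise $\hat Y_{ij}$ and $Y_{ij}$ share signs with $|\hat Y_{ij}|\le|Y_{ij}|$, so $1-\norm{\hat y_j}_2^2=\sum_i\bigl(Y_{ij}^2-\hat Y_{ij}^2\bigr)\ge\sum_i\bigl(Y_{ij}-\hat Y_{ij}\bigr)^2$, i.e.\ that intermediate inequality generally points the other way (e.g.\ $y_j=(1/\sqrt2,1/\sqrt2)\T$, $t=0.1$ gives $1-\norm{\hat y_j}_2^2\approx 0.26$ versus $\norm{\hat y_j-y_j}_2^2=0.02$, even though $\sin^2(\theta_j)=0$ so the proposition itself holds). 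Your algebra bypasses the quantity $1-\norm{\hat y_j}_2^2$ entirely and proves the stated bound directly, and it also pins down what ``$t$ sufficiently small'' must mean ($t<1/\sqrt p$ guarantees $\hat y_j\ne 0$, so the angle is defined and the division by $\norm{\hat y_j}_2^2$ is legitimate), a point the paper leaves implicit. So: correct, a sharper and more careful argument than the one in the appendix, with the only caveat being the degenerate case of a fully thresholded column, which your small-$t$ condition excludes.
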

\begin{proof} 
	%Since the length of $Y^t$ does not matter. For simplicity, we assume the columns of $Y^t$ have unit lengths. 
	Let $\hat{y}_i$ and $y_i$ be the $i$th column of $\hat{Y}$ and $Y$ respectively. For the included angle $\theta_i$, 
	\begin{eqnarray*}
		\cos(\theta_i) &=& \hat{y}_i\T y_i / \norm{\hat{y}}_2\\
		&=& \norm{\hat{y}_i}_2 +  \hat{y}_i\T (y_i- \hat{y}_i) / \norm{\hat{y}}_2\\
		&>&\norm{\hat{y}_i}_2.
	\end{eqnarray*}
	The last inequality results from the definition of soft-thresholding. Then, by the Pythagorean trigonometric identity, we have
	\begin{eqnarray*}
		\sin^2(\theta_i)&=&1-\cos^2(\theta_i)\\
		&<&1-\norm{\hat{y}_i}_2^2\\
		&\le&\norm{\hat{y}_i - y_i}_2^2.
	\end{eqnarray*}
	The last inequality is due to the triangular inequality.
	Finally, summing over the columns yields the desired result.
\end{proof}
Proposition \ref{thm:soft1} controls the deviation with the Frobenius norm of $Y-\hat{Y}$.
%By the H\"older's (power) mean inequality, that is, $\sqrt{pk}\norm{Y}_2\le \norm{Y}_{1}\le\norm{Y}_2$, for any $Y\in\R^{p\times k}$. 
Since the columns of $Y$ are mutually orthogonal, for any two columns of $\hat{Y}$, we have 
$$\left|\hat{y}_i\T \hat{y}_j\right| \le \sin\left(\theta_j+\theta_l\right) \norm{\hat{y}_i}_2 \norm{\hat{y}_j}_2$$
assuming $\theta_i+\theta_j\le\pi/2$.
Hence, a small deviation indicates that the orthogonality of $\hat{Y}$ is conserved after soft-thresholding. 

Next, we investigate the change in explained variation due to soft-thresholding. Define the explained variance (EV) of a data matrix $X$ by the loading matrix $Y$ as $\ev(Y)=\norm{XY}\F$. 
%A higher EV indicates better capability of components in representing the data.
The following proposition bounds the EV for $\hat{Y}$ and is due to the Theorem 13 in \citet{hu2016sparse}.
\begin{proposition} [Explained variance after soft-thresholding] \label{thm:soft2}
	If for all $1 \le i \le k$, $\theta_{i}=\theta$ and $\sum_{j=1}^k \cos(\theta_{ij})\le1$, then 
	$$\left(\cos^2\theta-\sqrt{k-1}\sin2\theta\right)\ev(Y)\le\ev(\hat{Y})$$ 
	for any data matrix $X$.
\end{proposition}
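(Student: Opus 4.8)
The plan is to adapt the argument of Theorem~13 in \citet{hu2016sparse}. Write $y_1,\dots,y_k$ and $\hat y_1,\dots,\hat y_k$ for the columns of $Y$ and of $\hat Y=T_\gamma(Y)$. If $\cos^2\theta-\sqrt{k-1}\,\sin 2\theta\le 0$ there is nothing to prove, since $\ev(\hat Y)=\norm{X\hat Y}\F\ge 0$; so I assume this number is positive, which forces $\theta$ to be small.

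The whole statement reduces to a single spectral bound on the $k\times k$ matrix $A:=Y\T\hat Y$, namely $\sigma_{\min}(A)\ge\cos^2\theta-\sqrt{k-1}\,\sin 2\theta$. To see the reduction, decompose $\hat Y=YA+R$ with $R=(I-YY\T)\hat Y\perp\operatorname{col}(Y)$, so that $X\hat Y=(XY)A+XR$. In the regime that motivates SCA, where $Y$ is an orthogonal rotation of the leading $k$ right singular vectors of $X$, the row space of the rank-$k$ part of $X$ equals $\operatorname{col}(Y)$; replacing $X$ by that rank-$k$ part leaves $\ev(Y)=\norm{XY}\F$ unchanged and kills $XR$, so $X\hat Y=(XY)A$, and then $\norm{X\hat Y}\F^2=\tr\!\big((XY)\T(XY)\,AA\T\big)\ge\sigma_{\min}(A)^2\,\norm{XY}\F^2$ gives the claim.

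It remains to bound $\sigma_{\min}(A)$. Entrywise, $A_{ji}=y_j\T\hat y_i=\norm{\hat y_i}_2\cos\theta_{ij}$, so $A=\cos\theta\cdot\operatorname{diag}(\norm{\hat y_i}_2)+E$ with $E$ zero on the diagonal and $E_{ji}=\norm{\hat y_i}_2\cos\theta_{ij}$. For the diagonal, the soft-thresholding identity $\cos\theta\,\norm{\hat y_i}_2=\norm{\hat y_i}_2^2+t\,\norm{\hat y_i}_1$ (already used in Proposition~\ref{thm:soft1}) gives $\norm{\hat y_i}_2\le\cos\theta$, with equality in the limit $t\to 0$, so the diagonal block has smallest singular value $\cos^2\theta-\order(t)$. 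For the off-diagonal, $\sum_j\cos\theta_{ij}\le 1$ and $\cos\theta_{ii}=\cos\theta$ force $\sum_{j\ne i}\cos\theta_{ij}\le 1-\cos\theta$, hence $\sum_{j\ne i}\cos^2\theta_{ij}\le 1-\cos\theta$ and
\begin{equation*}
	\norm{E}_{\mathrm{op}}\le\norm{E}\F\le\cos\theta\sqrt{k(1-\cos\theta)}\le\sqrt{k-1}\,\sin 2\theta ,
\end{equation*}
the last step being the elementary inequality $k\le 4(k-1)(1+\cos\theta)$, valid for $k\ge 2$. Weyl's inequality $\sigma_{\min}(A)\ge\cos^2\theta-\order(t)-\norm{E}_{\mathrm{op}}$ then yields the bound, and combined with the reduction above this completes the argument.

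The main obstacle is the residual $R=(I-YY\T)\hat Y$. For a data matrix $X$ unrelated to the subspace $\operatorname{col}(Y)$, $\norm{XR}\F$ need not be small relative to $\ev(Y)$, so the reduction to $X\hat Y=(XY)A$ genuinely uses that $\operatorname{col}(Y)$ carries the leading singular subspace of $X$ (equivalently, that one works with the projection-based explained variance $\norm{X_{\hat Y}}\F$). A subsidiary point is the ``$t$ sufficiently small'' qualifier: thresholding can shrink $\norm{\hat y_i}_2$ well below $\cos\theta$ even when the angle $\theta_i=\theta$ is moderate, so one must quantify how far $\norm{\hat y_i}_2$ can drop in terms of $t$ to keep the diagonal of $A$ near $\cos^2\theta$ --- this is the only quantitatively delicate step.
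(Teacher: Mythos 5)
First, note what you are comparing against: the paper does not prove Proposition \ref{thm:soft2} at all --- it imports the statement from Theorem 13 of \citet{hu2016sparse} --- so you are supplying an argument the text omits. Measured against the statement as printed, your attempt has two gaps, one of which you flag yourself but cannot set aside. The reduction that kills $XR$ (replacing $X$ by its rank-$k$ part whose row space is the column space of $Y$) is an extra hypothesis, not a convenience: with no relation between $X$ and $Y$ the claim is simply false; for $k=1$ take $X$ to be the single row $w\T$ with $w\perp\hat y$ but $w\not\perp y$ (possible whenever $\hat y\not\propto y$), so that $\ev(\hat Y)=0<\cos^2\theta\,\ev(Y)$. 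Hence ``for any data matrix $X$'' cannot be proved; the leading-subspace assumption you use is exactly the setting of \citet{hu2016sparse} (loadings of the form $VR$ with $V$ the top singular vectors), and it should be promoted from an apology to a stated hypothesis. With it, the decomposition $\hat Y=YA+R$, the inequality $\norm{XYA}\F^2\ge\sigma_{\min}(A)^2\norm{XY}\F^2$, and the Weyl step are sound (modulo assuming $\cos\theta_{ij}\ge0$ in the step $\sum_{j\ne i}\cos^2\theta_{ij}\le 1-\cos\theta$).

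Second, your ``subsidiary point'' about the diagonal of $A=Y\T\hat Y$ is not subsidiary; it is where the stated constant breaks. With the soft-thresholding \eqref{eqn:soft}, which does not rescale columns, one has $\cos\theta_i\,\norm{\hat y_i}_2=y_i\T\hat y_i=\norm{\hat y_i}_2^2+t\norm{\hat y_i}_1$, so $\norm{\hat y_i}_2<\cos\theta_i$ strictly for every $t>0$, and $A_{ii}=\cos\theta\,\norm{\hat y_i}_2$ lies strictly below $\cos^2\theta$; your argument therefore only delivers the constant $\cos^2\theta-\order(t\sqrt p)-\sqrt{k-1}\sin2\theta$. This loss cannot be absorbed by the $\sqrt{k-1}\sin2\theta$ slack: at $k=1$ the bound has no slack, and with $X=\sigma u y\T$ one gets $\ev(\hat Y)=\sigma\cos\theta\,\norm{\hat y}_2<\sigma\cos^2\theta=\cos^2\theta\,\ev(Y)$ for any $t>0$ (already $\theta=0$ gives $\ev(\hat Y)=\sigma\norm{\hat y}_2<\ev(Y)$). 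The exact constant in Theorem 13 of \citet{hu2016sparse} is tied to their truncation operators, which renormalize each thresholded column to unit length; with that normalization the diagonal of $A$ is exactly $\cos\theta$ and your decomposition-plus-Weyl route does go through. So either add column renormalization of $\hat Y$ together with the leading-subspace hypothesis, in which case your proof is essentially correct, or accept an explicit $t$-dependent weakening of the constant; as a proof of the proposition verbatim, the attempt does not close --- and indeed the proposition as transcribed (unnormalized $T_\gamma$, arbitrary $X$) is not literally true.
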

Proposition \ref{thm:soft2} implies that if the deviation between $Y$ and $\hat{Y}$ is small, then the EV of $\hat{Y}$ is close to that of $Y$, $$\left(\cos^2\theta-\order(\theta)\right)\ev(Y)\le\ev(\hat{Y}).$$

\section{Independent component analysis} \label{sxn:ica}
In this section, we demonstrate the connection between sparse PCA (specifically, our SCA formulation) and independent component analysis (ICA).

ICA is motivated by blind-source (or blind-signal) separation in signal processing \citep[see, e.g.,][]{georgiev2005sparse,comon2010handbook}, where we observe a series of multivariate signals $X_{i\cdot}\in\R^{p}$ for $i=1,2,...,n$, where $n$ is the number of observations.
In ICA, there exist $k$ independent, non-Gaussian and unobserved \textit{source} signals underlying each observation, $Z_{i\cdot} \in\R^k$ for $i=1,2,...,n$, and each observation is a linear mixture of these source signals, this is, $X=ZM\T$ (or $X_{i\cdot} = Z_{i\cdot}M$ for $i=1,2,...,n$), where $M\in\R^{p \times k}$ is the \textit{mixing} matrix. 
ICA aims to ``un-mix'' the observed $X$ and extract $Z$ from it. 
In particular, since the $k$ source signals are independent, it is often assumed that $Z$'s columns have unit length and are orthogonal to each other (i.e., $Z\in\V(n,k)$).
The ICA literature is rich in theoretical results \citep{hyvarinen2000independent, chen2006efficient, samworth2012independent, miettinen2015fourth}, and most methods for ICA (e.g. fastICA) identifies both platykurtic- and leptokurtic-sourced signals.

We consider a sparse version of ICA, sparse ICA, where $Z$ is sparse (or the columns of $Z$ follow leptokurtic distributions).
We show that sparse ICA and sparse PCA are unified by the SMA.
To see this, recall from Section \ref{sxn:sma} that the SMA of a data matrix is $ZBY\T$, where $Z$ and $Y$ are both sparse but $B$. 
We interpret the SMA for the two modern multivariate data analysis:
\begin{description}
	\item[Sparse PCA] For sparse PCA, we treat $Y$ as the sparse loadings, and $ZB$ together as the component scores. 
	\item[Sparse ICA] For sparse ICA, the sparse source signals (or the independent components) are the columns of $Z$, the mixing matrix is $BY\T$.
\end{description}
It can be seen that both sparse PCA and sparse ICA seek a sparse component in the data: sparse PCA extracts them for the column space ($Y$), while ICA the row space ($Z$).
Hence, performing sparse PCA to the transposed input data matrix actually accomplishes sparse ICA to the original data.
This highlights the similarities between sparse PCA and sparse ICA.

\subsection{Example: Blind source separation with SCA} \label{sxn:bss}
We apply SCA to the blind source separation of image data \citep{comon2010handbook}.
For example, suppose the source signals are individual images, and a sensor senses several mixed images, each an linear mixture of the sources. 
The objective is then to identify the source images from the observed ones (i.e., to decipher the linear coefficients).

\begin{figure}
	\centering
	\includegraphics[page=1,width=0.6\linewidth]{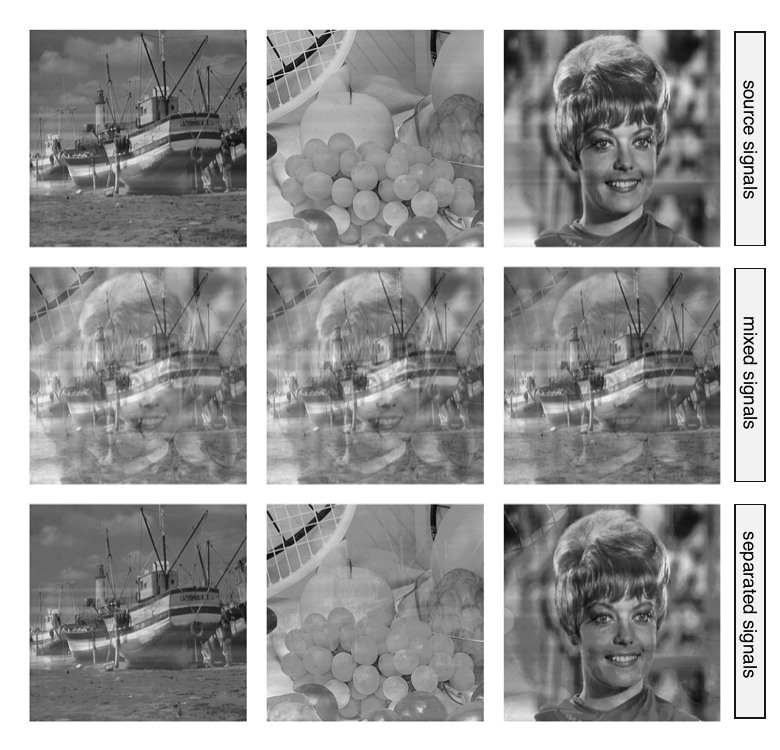}
	\caption{Blind image signal separation using SCA. The three panel rows display three source images, three linear mixtures of the source images, and the three separated images using SCA.}
	\label{fig:bss}
\end{figure}

We selected three $512 \times 512$-pixels pictures of diverse genres from the internet (Figure \ref{fig:bss}, the first row). 
The sample excess kurtosis of the images are 1.53, 3.32, and -0.45 respectively.
Next, we generated three ($n=3$) mixtures of the original images, with the linear coefficients randomly drawn from the uniform distribution, $\text{Unif(0,1)}$. The three mixed images are displayed in the second row of Figure \ref{fig:bss}. 
For sparse PCA, we vectorize the mixed images (that is $512^2$-pixels) and put them in a shallow matrix $X\in\R^{n \times p}$, where $p=262,144$.
This matrix is then input to SCA (Algorithm \ref{alg:sca}) for three sparse PCs ($k=3$), with the sparsity parameter $\gamma$ set to $\sqrt{nk}$. 
%Note that this is basically sparse ICA. 
The resulting sparse loadings $Y\in\R^{p\times k}$ contains the three separated source images and the scores $S\in\R^{n\times k}$ decodes the mixing coefficients.
The third row in Figure \ref{fig:bss} displays the three separated images (i.e.,  the three rows of $Y$.)
The clean-cut identification of the source images suggests that sparse PCA is capable of extracting sparse and independent components from the data.

\subsection{Algorithmic comparisons}
Another insight for sparse PCA and sparse ICA can be gleaned from their algorithms. In this section, we demonstrate that the fastICA algorithm \citep{hyvarinen1999fast} and our SCA algorithm are both closely related to kurtosis \citep{mardia1970measures}.

The fastICA algorithm finds $Z$ in two steps. 
The first step is to pre-process $X$. The pre-processing of centering and whitening (see, e.g., \citet{comon1994independent}) results in the leading $k$ left singular vectors $\hat{U}\in\V(n,k)$.
The second steps searches for an orthogonal rotation that maximize the non-gaussianity of $\hat{U}R$, as measured by the approximation of negentropy, 
\begin{eqnarray} \label{eqn:ica1}
	\maxi_{R} &  \sum_{j=1}^k \left\{G({[\hat{U}R]}_{\cdot j})-G(\nu)\right\}^2 & \st R\in\U(k),
\end{eqnarray}
where $G(x)$ is a non-quadratic function for $x\in\R^n$, and $\nu\sim\text{N}(0,I_n)$ is the multivariate standard Gaussian vector.
Finally, $\hat{U}\hat{R}$ is the fastICA estimate for $Z$, where $\hat{R}$ is the solution to \eqref{eqn:ica1}.
\citet{hyvarinen1999fast} noted that setting $G(x) = \norm{x}_4^4/n$, the optimization in \eqref{eqn:ica1} takes the form\footnote{The authors also suggested different forms of $G(x)$.} 
\begin{eqnarray} \label{eqn:ica2}
	\maxi_{R} & \sum_{j=1}^k \text{kurt}^2 ({[UR]}_{\cdot j}) & \st R\in\U(k),
\end{eqnarray}
where $\text{kurt}(x)$ is the sample excess kurtosis of $x\in\R^n$ and is defined as $\text{kurt}(x)=n\sum_{i=1}^n (x_i-\bar{x})^4/\left(\sum_{i=1}^n (x_i-\bar{x})^2\right)^2-3$, where $\bar{x}=\sum_{i=1}^n x_i/n$ is the mean.
It can be seen from \eqref{eqn:ica2} that fastICA produces either leptokurtic ($\text{kurt}(x) > 0$) or platykurtic ($\text{kurt}(x) < 0$) estimation for the columns of $Z$, because of the squared kurtosis in the objective function. This primarily explains that fastICA allows both platykurtic- and leptokurtic-sourced signals.

As for SCA, the algorithm uses the varimax rotation to find the orthogonal rotation. Suppose $Y\in\V(n,k)$. Since the sum of squares of $Y$'s columns are constant, $\sum_{j=1}^k Y_{ij}^2=1$, maximizing the varimax rotation is equivalent to maximizing the sum of sample kurtosis of $Y$'s columns,
$$C_\text{varimax}(Y) = \sum_{j=1}^k \text{kurt}(Y_{\cdot j}) + \text{constant}.$$
This suggests that the varimax rotation in SCA promotes some leptokurtic columns in the loading $Y$ of sparse PCs.
Note that any sparse distribution is leptokurtic (see Theorem 2.1 of \citet{rohe2020vintage}). Hence, SCA generates specifically sparse PCs. 

In many applications of ICA, the number of independent components and the number of observed variables are the same (i.e., $p=k$), in which case, the mixing matrix is square. The $p=k$ regime is generally challenging. As such, many theoretical results presume no or very little noise in $X$, in order for estimating guarantees. 
By contrast, sparse PCA typically presumes the data to comprise noise and the statistical model usually contain a noise term. 
In addition, it is showed that sparse PCA is consistent even when the observed data is high-dimensional (i.e., $p$ grows at the same rate as $n$) or sparse by itself (i.e. contains many zeros) \citep{rohe2020vintage}, while it is unclear yet whether ICA is consistent or not under these settings.

\newpage
%\scriptsize
%\singlespace
\setlength{\bibsep}{0pt plus 0.3ex}
\bibliographystyle{plainnat}
\bibliography{spca}

\end{document}